\newif\ifcomments
\definecolor{mycolor}{RGB}{20, 20, 122}
\numberwithin{equation}{section}
\newtheorem{theorem}{Theorem}
\newtheorem{lemma}[theorem]{Lemma}
\def\R{{\mathbb R}}
\let\on=\operatorname
\newcommand{\ud}{\,\mathrm{d}}
\newcommand{\wrt}{{w.r.t.}}
\newcommand{\ie}{{i.e.}}
\newcommand{\eg}{{e.g.}}
\newcommand{\Eg}{{E.g.}}
\newcommand{\zy}[1]{{\color{black}{#1}}}
\title{Region-specific Diffeomorphic Metric Mapping}
\author{
  Zhengyang Shen \\
  UNC Chapel Hill\\
  {\tt\small zyshen@cs.unc.edu} 
  \And 
   François-Xavier Vialard \\
   LIGM, UPEM \\
   {\tt\small francois-xavier.vialard@u-pem.fr}
   \And 
   Marc Niethammer \\
   UNC Chapel Hill \\
   {\tt\small mn@cs.unc.edu}
}
\begin{document}

\maketitle

\begin{abstract}

 We introduce a region-specific diffeomorphic metric mapping (RDMM) registration approach. RDMM is non-parametric, estimating spatio-temporal velocity fields which parameterize the sought-for spatial transformation. Regularization of these velocity fields is necessary. In contrast to existing non-parametric registration approaches using a fixed spatially-invariant regularization, for example, the large displacement diffeomorphic metric mapping (LDDMM) model, our approach allows for spatially-varying regularization which is advected via the estimated spatio-temporal velocity field. Hence, not only can our model capture large displacements, it does so with a spatio-temporal regularizer that keeps track of how regions deform, which is a more natural mathematical formulation. We explore a family of RDMM registration approaches: 1) a registration model where regions with separate regularizations are pre-defined (\eg, in an atlas space or for distinct foreground and background regions), 2) a registration model where a general spatially-varying regularizer is estimated, and 3) a registration model where the spatially-varying regularizer is obtained via an end-to-end trained deep learning (DL) model. We provide a variational derivation of RDMM, showing that the model can assure diffeomorphic transformations in the continuum, and that LDDMM is a particular instance of RDMM. To evaluate RDMM performance we experiment 1) on synthetic 2D data and 2) on two 3D datasets: knee magnetic resonance images (MRIs) of the Osteoarthritis Initiative (OAI) and computed tomography images (CT) of the lung. Results show that our framework achieves comparable performance to state-of-the-art image registration approaches, while providing additional information via a learned spatio-temporal regularizer. Further, our deep learning approach allows for very fast RDMM and LDDMM estimations. Code is available at \url{https://github.com/uncbiag/registration}.

\end{abstract}

\section{Introduction}

Quantitative analysis of medical images frequently requires the estimation of spatial correspondences, \ie image registration. For example, one may be interested in capturing knee cartilage changes over time, localized changes of brain structures, or how organs at risk move between planning and treatment for radiation treatment. Specifically, image registration seeks to estimate the spatial transformation between a source image and a target image, subject to a chosen transformation model. 

% The sought-for solution is a spatial transformation which aligns a source image well to a target image while assuring that the transformation is sufficiently regular.

% The similarity and regularization are two contradict elements. For linear registration models, such as centered, rigid and affine methods, they have limited parameters (12 parameters for affine in 3d cases) which make them well regularized but also compromise the similarity, and thus they are most used for pre-align.  

Transformations can be parameterized via low-dimensional parametric models (\eg, an affine transformation), but more flexible models are required to capture subtle local deformations. Such registration models~\cite{bajcsy1989multiresolution,shen2002hammer} may have large numbers of parameters, \eg, a large number of B-spline control points~\cite{rueckert1999nonrigid} or may even be non-parametric where vector fields are estimated~\cite{beg2005computing,modersitzki2004numerical}. Spatial regularity can be achieved by appropriate constraints on displacement fields~\cite{haber2007image} or by parameterizing the transformation via integration of a sufficiently regular stationary or time-dependent velocity field~\cite{beg2005computing,hart2009optimal,vercauteren2009diffeomorphic,chen2013large,wulff2015efficient}. Given sufficient regularization, diffeomorphic transformations can be assured in the continuum. A popular approach based on time-dependent velocity fields is LDDMM~\cite{beg2005computing,hart2009optimal}. Optimal LDDMM solutions are geodesics and minimize a geodesic distance. Consequentially, one may directly optimize over a geodesic's initial conditions in a shooting approach~\cite{vialard2012diffeomorphic}.

Most existing non-parametric image registration approaches use spatially-invariant regularizers. However, this may not be realistic. \Eg, when registering inhale to exhale images of a lung one expects large deformations of the lung, but not of the surrounding tissue. Hence, a spatially-varying regularization would be more appropriate. As the regularizer encodes the deformation model this then allows anticipating different levels of deformation at different image locations.

\begin{figure}[!ht]
%\hspace*{-1cm}
\includegraphics[width=1\textwidth]{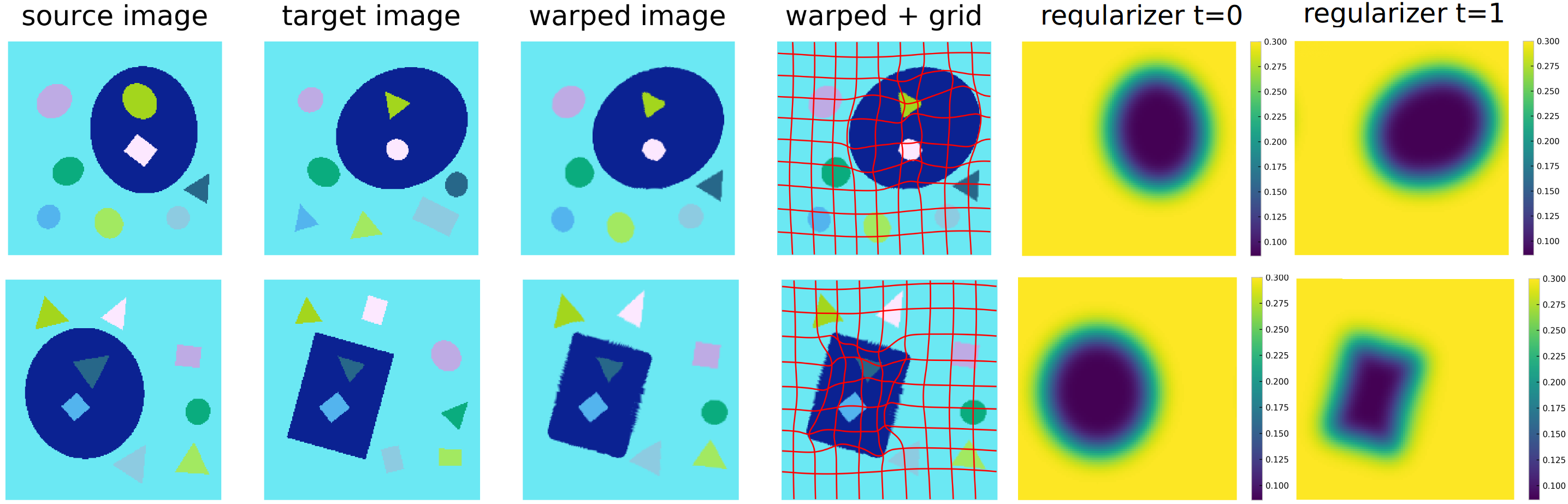}
\caption{RDMM registration example. The goal is to register the dark blue area with high fidelity (\ie, allowing large local deformations), while assuring small deformations in the cyan area.  Initially ($t=0$), a spatially-varying regularizer (fifth column) is specified in the source image space, where dark blue indicates small regularization and yellow large regularization. Specifically, regularizer values indicate effective local standard deviations of a local multi-Gaussian regularizer. Since the transformation map and the regularizer are both advected according to the estimated velocity field, the shape of the regularizer follows the shape of the deforming dark blue region and is of the same shape as the region of interest in the target space at the final time ($t=1$) (as can be seen in the second and the last columns). Furthermore, objects inside the dark blue region are indeed aligned well, whereas objects in the cyan region were not strongly deformed due to the larger regularization there.}
\label{fig:syn_2d}
\end{figure}

While spatially-varying regularizers may be used in LDDMM variants~\cite{schmah2013left} existing approaches do not allow for \emph{time-varying} spatially-varying regularizers. However, such regularizers would be natural for large displacement as they can move with a deforming image. Hence, we propose a family of registration approaches with \emph{spatio-temporal regularizers} based on advecting spatially-varying regularizers via an estimated spatio-temporal velocity field. Specifically, we extend LDDMM theory, where the original LDDMM model becomes a special case. In doing so, our entire model, including the spatio-temporal regularizer is expressed via the initial conditions of a partial differential equation. We propose three different approaches based on this model: 1) A model for which the regularizer is specified region-by-region. This would, for example, be natural when registering a labeled atlas image to a target image, as illustrated in Fig.~\ref{fig:syn_2d}.  2) A model in which the regularizer is estimated jointly with the spatio-temporal velocity fields. 3) A deep learning model which predicts the regularizer and the initial velocity field, thereby resulting in a very fast registration approach. 

\zy{{\bf Related Work}  Only limited work \cite{pace2013locally,risser2013piecewise,stefanescu2004grid,vialard2014spatially} on spatially-varying regularizers exists. The most related work~\cite{niethammer2019_cvpr} learns a spatially-varying regularizer for a stationary velocity field registration model. In~\cite{niethammer2019_cvpr} both the velocity field and the regularizer are assumed to be constant in time. In contrast, both are time-dependent in RDMM. Spatially-varying regularization has also been addressed from a Bayesian view in~\cite{simpson2015probabilistic} by putting priors on B-spline transformation parameters. However, metric estimation in~\cite{simpson2015probabilistic} is in a fixed atlas-space, whereas RDMM addresses general pairwise image registration.}\\

%In contrast, our velocity and regularizer are both spatially- and time- varying; the most importantly, the proposed regularizers can track the deformation along time axis and thus is region-specific.

{\bf Contributions}: 1) We propose RDMM, a new registration model for large diffeomorphic deformations with a spatio-temporal regularizer capable of following deforming objects and thereby providing a more natural representation of deformations than existing non-parametric models, such as LDDMM. 2) Via a variational formulation we derive shooting equations that allow specifying RDMM solutions entirely based on their initial conditions: an initial momentum field and an initial spatially-varying regularizer. 3) We prove that diffeomorphisms can be obtained for RDMM in the continuum for sufficiently regular regularizers. 4) We explore an entire new family of registration models based on RDMM and provide optimization-based \emph{and} very fast deep-learning-based approaches to estimate the initial conditions of these registration models. 5) We demonstrate the utility of our approach via experiments on synthetic data and on two 3D medical image datasets.

\section{Standard LDDMM Model}
\label{sec:standard_lddmm}

LDDMM~\citep{beg2005computing} is a non-parametric registration approach based on principles from fluid mechanics. It is based on the estimation of a spatio-temporal velocity field $v(t,x)$ from which the sought-for spatial transformation $\varphi$ can be computed via integration of $\partial_t \varphi (t,x) = v(t,\varphi (t,x))\,.$  For appropriately regularized velocity fields~\citep{dupuis1998variational}, diffeomorphic transformations can be guaranteed. The optimization problem underlying LDDMM for images can be written as ($\nabla$ being the gradient; \zy{$\langle \cdot,\cdot\rangle$ indicating the inner product})
\begin{equation}
v^* = \underset{v}{\text{argmin}}~ \frac 12 \int_0^1 \|v(t) \|^2_L \ud t + \on{Sim}(I(1),I_1), \quad\text{s.t.}\quad
\partial_t I + \langle \nabla I, v\rangle=0; I(0)=I_0\enspace.
\label{eq:lddmm}
\end{equation}
Here, the goal is to register the source image $I_0$ to the target image $I_1$ in unit time. $\on{Sim}(A,B)$ is a similarity measure between images, often sum of squared differences, normalized cross correlation, or mutual information. Furthermore, we note that $I(1,y)=I_0\circ\varphi^{-1}(1,y)$, where $\varphi^{-1}$ denotes the inverse of $\varphi$ in the target image space. The evolution of this map can be expressed as \begin{equation}\label{EqFlowEquation}\partial_t \varphi^{-1} + D\varphi^{-1}v=0\,,
\end{equation} 
where $D$ denotes the Jacobian. Equivalently, in Eq.~\eqref{eq:lddmm}, we directly advect the image~\citep{hart2009optimal,vialard2012diffeomorphic} via $\partial_t I + \langle \nabla I, v\rangle=0$. To assure smooth transformations, LDDMM penalizes non-smooth velocity fields via the norm $\|v\|_L^2=\langle Lv,v\rangle$, where $L$ is a differential operator.

At optimality of Eq.~\eqref{eq:lddmm} the Euler-Lagrange equations are (div denoting the divergence)
\begin{equation}
\partial_t I + \langle \nabla I,  v\rangle = 0,~I(0)=I_0;~ 
\partial_t \lambda + \on{div}(\lambda v) = 0,~\lambda(1) = \frac{\delta}{\delta I(1)}\on{Sim}(I(1),I_1) = 0;~v=L^{-1} (\lambda\nabla I).
\label{system_lddmm}
\end{equation}
Here, $\lambda$ is the adjoint variable to $I$, also known as the scalar momentum~\citep{hart2009optimal,vialard2012diffeomorphic}. As $L^{-1}$ is a smoother it is often chosen as a convolution, \ie, $v=K\star (\lambda\nabla I)$. Note that $m(t,x) := \lambda \nabla I$ is the vector-valued momentum and thus $v=K \star m$. One can directly optimize over $v$ solving Eq.~\eqref{system_lddmm}~\citep{beg2005computing} or regard Eq.~\eqref{system_lddmm} as a constraint defining a geodesic path~\citep{vialard2012diffeomorphic} and optimize over all such solutions subject to a penalty on the initial scalar momentum as well as the similarity measure. Alternatively, one can express \zy{\footnote{Thm.~\eqref{thm:rdmm_momentum_optimality} in suppl.~\ref{sec:optimality_conditions}  provides a more generalized derivation.}} these equations entirely with respect to the vector-valued momentum, $m$, resulting in the Euler-Poincar\'e equation for diffeomorphisms (EPDiff)~\citep{younes2009evolutions}:
\begin{equation}
\partial_t m + \on{div}(v) m + D v^T(m) + D m(v) = 0,~m(0)=m_0, v=K\star m\,,
\label{system_lddmm_epdiff}
\end{equation}
which defines the evolution of the spatio-temporal velocity field based on the initial condition, $m_0$, of the momentum, from which the transformation $\varphi$ can be computed using Eq.~\eqref{EqFlowEquation}. 
%Here, div denotes the divergence operator.
Both \eqref{system_lddmm} and \eqref{system_lddmm_epdiff} can be used to implement shooting-based LDDMM~\citep{vialard2012diffeomorphic,singh2013vector}. As LDDMM preserves the momentum, $\|v\|_L^2$ is constant over time and hence a {\it shooting formulation} can be written as
\begin{equation}
    m(0)^* = \underset{m(0)}{\text{argmin}}~ \frac 12 \|v(0) \|^2_L + \on{Sim}(I(1),I_1),
    \label{lddmm_shooting}
\end{equation}
subject to the EPDiff equation~\eqref{system_lddmm_epdiff} including the advection of $\varphi^{-1}$, where $I(1)=I_0\circ\varphi^{-1}(1)$.

A shooting-formulation has multiple benefits: 1) it allows for a compact representation of $\varphi$ via its initial conditions; 2) as the optimization is \wrt~the initial conditions, a solution is a geodesic by construction; 3) these initial conditions can be predicted via deep-learning, resulting in very fast registration algorithms which inherit the theoretical properties of LDDMM~\citep{yang2017quicksilver,yang2016fast}. We therefore use this formulation as the starting point for RDMM in Sec~\ref{sec:rdmm}.

%At each time step, the advection equation updates the transformation map $\varphi^{-1}$ while the EPDiff Equation updates the momentum $m$, by smoothing which the velocity filed $v$ is obtained. 

 %The quantity 
 %Picking a specific $L$ implies picking an expected model of deformation. In its simplest form, such a differential operator is \emph{spatially-invariant} and encodes a desired level of smoothness.
%The benefits of such a formulation is that it allows us to explicitly optimize on momentum first and then subsequently smoothed to obtain the velocity field. The norm term then can be written as  $\|v(t)\|_K^2 = \langle m,v\rangle$. \\
 %In LDDMM, the geodesic follows the property of conservation of momentum. The regularization term thus can be written into $ \int_0^1 \|v(t) \|^2_K \ud t = \langle m_0,v_0\rangle$, where only the initial condition is penalized.\\

\section{Region-Specific Diffeomorphic Metric Mapping (RDMM)}
\label{sec:rdmm}

In standard LDDMM approaches, the regularizer $L$ (equivalently the kernel $K$) is spatially invariant. While recent work introduced spatially-varying metrics in LDDMM, for stationary velocity fields, or for displacement-regularized registration~\citep{risser2013piecewise,schmah2013left,niethammer2019_cvpr,pace2013locally}, all of these approaches use a temporally fixed regularizer. Hereafter, we generalize LDDMM by \emph{advecting} a spatially-varying  regularizer via the estimated spatio-temporal velocity field. Standard LDDMM is a special case of our model. 

Following~\citep{niethammer2019_cvpr}, we introduce $(V_i)_{i = 0,\ldots,N-1}$ a finite family of reproducing kernel Hilbert spaces (RKHS) which are defined by the pre-defined Gaussian kernels $K_{\sigma_i}$ with $\sigma_0 < \ldots < \sigma_{N-1}$. We use a partition of unity $w_i(x,t),{i=0,\ldots,N-1}$, on the image domain. As we want the kernel $K_{\sigma_i}$ to be active on the region determined by $w_i$ we introduce the vector field $v = \sum_{i=0}^{N-1} w_i \nu_i$ for $\nu_i \in V_i$. On this new space of vector fields, there exists a natural RKHS structure defined by
\begin{equation}
\| v \|^2_L := \inf \left\{ \sum_{i=0}^{N-1} \|\nu_i \|^2_{V_i}\,| \, v = \sum_{i=0}^{N-1} w_i \nu_i \right\}\,,
\end{equation}
whose kernel is $K = \sum_{i=0}^{N-1} w_i K_{\sigma_i} w_i$. 
 Thus the velocity reads (\zy{see suppl.~\ref{sec:kernel} for the derivation})
\begin{equation}
v=K \star m \stackrel{\mathrm{def.}}{=} \sum_{i=0}^{N-1} w_{i} K_{\sigma_i} \star (w_{i}m), w_{i} \geq 0\, ,
\label{eq:velocity_relationship}
\end{equation}
 which can capture multi-scale aspects of deformation~\cite{risser2010simultaneous}. In LDDMM, the weights are constant and pre-defined. Here, we allow spatially-dependent weights $w_i(x)$. In particular (see formulation below), we advect them via, $v(t,x)$ thereby making them spatio-temporal, \ie, $w_i(t,x)$. In this setup, weights only need to be specified at initial time $t=0$. As the Gaussian kernels are fixed convolutions can still be efficiently computed in the Fourier domain.

We prove (see suppl.~\ref{sec:mathematical_properties} for the proof) that, for sufficiently smooth weights $w_i$, the velocity field is bounded and its flow is a diffeomorphism. Following~\citep{niethammer2019_cvpr}, to assure the smoothness of the initial weights we instead optimize over \emph{initial pre-weights}, $h_{i}(0,x)\geq 0$, s.t. $w_{i}(0,x)=G_\sigma \star h_{i}(0,x)$, where $G_\sigma$ is a fixed Gaussian with a small standard deviation, $\sigma$. In addition, we constrain $\sum_{i=0}^{N-1} h_i^2(0,x)$ to locally sum to one. The optimization problem for our RDMM model then becomes
\begin{equation}
v^*,\{h_i^*\} =~\underset{v,\{h_i\}}{\text{argmin}}~ \frac 12 \int_0^1 \|v(t) \|^2_L \ud t + \on{Sim}(I(1),I_1) + \on{Reg}(\{h_i(0)\})\, ,
\label{eq:rdmm_opt_problem}
\end{equation}
subject to the constraints
\begin{gather}
\partial_t I + \langle \nabla I , v \rangle = 0,~I(0)=I_0;~
\partial_t h_i + \langle \nabla h_i , v \rangle = 0,~h_i(0)=(h_i)_0;\\ \notag
\nu_i = K_{\sigma_i} \star (w_i m); v = \sum_{i =0}^{N-1} w_i \nu_i;~
w_i = G_\sigma\star h_i\,.\label{eq:rdmm_image_constraints}
\end{gather}
As for LDDMM, we can compute the optimality conditions for Eq.~\eqref{eq:rdmm_opt_problem} which we use for shooting.

\begin{theorem}[Image-based RDMM optimality conditions] \label{thm:rdmm_image_optimality}
With the adjoints $\gamma_i$ (for $h_i$) and $\lambda$ (for $I$) and the momentum $m:= \lambda\nabla I +\sum_{i=0}^{N-1} \gamma_i \nabla h_i $ the optimality conditions for~\eqref{eq:rdmm_opt_problem} are: 
\begin{gather}
\partial_t I + \langle \nabla I , v \rangle = 0,~I(0)=I_0;~\partial_t \lambda + \on{div}(\lambda v) = 0,~-\lambda(1) +  \frac{\delta}{\delta I(1)}\on{Sim}(I(1),I_1) = 0;\\
\partial_t h_i + \langle \nabla h_i , v \rangle = 0,~h_i(0)=(h_i)_0;\\
\partial_t \gamma_i + \on{div}(\gamma_i v) = G_\sigma \star (m \cdot \nu_i),~\gamma_i(0) +\frac{\delta}{\delta h_i(0)}\on{Reg}(\{h_i(0)\}) = 0\,.
\end{gather}
subject to  
\begin{equation}
\nu_i = K_{\sigma_i} \star (w_i m);~v = \sum_{i =0}^{N-1} w_i \nu_i;~w_i=G_{\sigma}\star h_i\,.
\label{eq:vel_momentum_relationship}
\end{equation}

% \begin{equation}\label{EqMovingMetrics}
% \min \frac 12 \int_0^1 \sum_{i=1}^N \|v_i(t) \|^2_{K_{\sigma_i}} \ud t +  \on{Sim}(I \circ \varphi_1^{-1},J)\,,
% \end{equation}
\end{theorem}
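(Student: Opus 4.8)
The plan is to read Eq.~\eqref{eq:rdmm_opt_problem} as an optimal-control problem---states $I$ and $h_i$ driven by the transport equations, with controls the velocity fields and the initial weights---and to obtain the stated conditions by the Lagrange-multiplier (adjoint) method, exactly as for standard LDDMM in Eq.~\eqref{system_lddmm}, but tracking the extra way in which the advected weights enter the velocity. The crucial modeling choice I would make at the outset is to treat the component fields $\nu_i \in V_i$, rather than $v$ itself, as the primitive controls: then the kinetic energy is $\tfrac12\sum_i\|\nu_i\|^2_{V_i}$, which carries \emph{no} explicit dependence on the weights, while $v=\sum_i w_i\nu_i$ with $w_i=G_\sigma\star h_i$ is a defined quantity. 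Introducing multipliers $\lambda$ for the image transport and $\gamma_i$ for each weight transport, I form
\begin{multline*}
\mathcal{L} = \tfrac12\int_0^1\sum_{i}\|\nu_i\|^2_{V_i}\,\ud t + \on{Sim}(I(1),I_1) + \on{Reg}(\{h_i(0)\})\\
 - \int_0^1\langle\lambda,\partial_t I + \langle\nabla I,v\rangle\rangle\,\ud t - \sum_{i}\int_0^1\langle\gamma_i,\partial_t h_i + \langle\nabla h_i,v\rangle\rangle\,\ud t,
\end{multline*}
and compute its first variation one variable at a time.

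Stationarity in $\lambda$ and in $\gamma_i$ returns the two transport constraints. Varying $I$ and integrating by parts in $t$ and $x$---moving the derivatives off $\delta I$ via $\langle\lambda,\langle\nabla\delta I,v\rangle\rangle = -\langle\on{div}(\lambda v),\delta I\rangle$---produces the backward transport $\partial_t\lambda + \on{div}(\lambda v)=0$; since $I(0)=I_0$ is fixed, only the $t=1$ endpoint term survives, giving the terminal condition $-\lambda(1)+\frac{\delta}{\delta I(1)}\on{Sim}(I(1),I_1)=0$. Varying each control $\nu_k$ (which enters the kinetic term directly and enters $v$ only through the factor $w_k$, so $\delta v = w_k\delta\nu_k$) gives $K_{\sigma_k}^{-1}\nu_k = w_k\,(\lambda\nabla I + \sum_i\gamma_i\nabla h_i)$; this simultaneously identifies the vector-valued momentum $m=\lambda\nabla I+\sum_i\gamma_i\nabla h_i$ and reproduces the relations $\nu_i=K_{\sigma_i}\star(w_i m)$, $v=\sum_i w_i\nu_i$ of Eq.~\eqref{eq:vel_momentum_relationship}.

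The step I expect to be the main obstacle is the variation with respect to $h_i$, because $h_i$ feeds the functional through two channels: directly through its own transport multiplier, and indirectly through $w_i=G_\sigma\star h_i$ inside $v$. Here the choice of $\nu_i$ as controls pays off: the kinetic energy does not depend on $h_i$ at all, so the only velocity-mediated contribution comes from $\delta v = \delta w_i\,\nu_i$ in the coupling term $-\int\langle m,v\rangle$, namely $-\int(\delta w_i)\,(m\cdot\nu_i)$, and the self-adjointness of the Gaussian smoothing $\delta w_i = G_\sigma\star\delta h_i$ transfers this to $-\int \delta h_i\,\big(G_\sigma\star(m\cdot\nu_i)\big)$. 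Combining with the direct-channel terms---integration by parts of $-\langle\gamma_i,\partial_t\delta h_i + \langle\nabla\delta h_i,v\rangle\rangle$, which yields $\partial_t\gamma_i+\on{div}(\gamma_i v)$---the interior Euler--Lagrange equation is exactly $\partial_t\gamma_i+\on{div}(\gamma_i v)=G_\sigma\star(m\cdot\nu_i)$, with the source term appearing with coefficient one precisely because the kinetic energy contributed nothing. Finally, the endpoint terms from this integration by parts supply the boundary data: the free terminal state $h_i(1)$ forces $\gamma_i(1)=0$, while the free initial weights $h_i(0)$---an optimization variable, together with the only term depending on it, $\on{Reg}$---force $\gamma_i(0)+\frac{\delta}{\delta h_i(0)}\on{Reg}(\{h_i(0)\})=0$, which is the stated initial condition. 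The remaining work is careful sign bookkeeping through the two integrations by parts, which the minus-sign convention on the multiplier terms above is chosen to make consistent with the signs in the theorem.
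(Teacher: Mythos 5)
Your proposal is correct, and it reaches the theorem by a genuinely different route than the paper. The paper keeps $\tfrac12\|v\|_L^2$ in the objective with the weight-dependent kernel $K=\sum_i w_i K_{\sigma_i} w_i$ (derived beforehand in suppl.~\ref{sec:kernel}) and varies $m$ and $h_i$ jointly; there the source in the $\gamma_i$ equation emerges from a cancellation: the variation of the metric term contributes $+\langle G_\sigma\star(m^T\nu_i),dh_i\rangle$, while the variation of the transport-coupling term $\langle m, dv\rangle$ contributes $-2\langle G_\sigma\star(m^T\nu_i),dh_i\rangle$, leaving the coefficient-one source $G_\sigma\star(m\cdot\nu_i)$. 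You instead lift the problem to the component fields $\nu_i$ as primitive controls, which is legitimate precisely because $\|v\|_L^2$ is \emph{defined} as the infimum of $\sum_i\|\nu_i\|_{V_i}^2$ over decompositions $v=\sum_i w_i\nu_i$, so substituting the lifted kinetic energy changes neither minimizers nor stationarity conditions (it would be worth one sentence in your write-up making this equivalence explicit). The lift buys you two things: the kinetic term no longer depends on $h_i$, so the source term appears with coefficient one in a single step with no cancellation bookkeeping; and stationarity in $\nu_k$ simultaneously identifies $m=\lambda\nabla I+\sum_i\gamma_i\nabla h_i$ and re-derives the relations $\nu_i=K_{\sigma_i}\star(w_i m)$, $v=\sum_i w_i\nu_i$ of Eq.~\eqref{eq:vel_momentum_relationship}, effectively merging the paper's separate kernel derivation into the optimality computation. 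What the paper's route buys in exchange is the explicit variation formula $\delta\bigl(\tfrac12\|v\|_L^2\bigr)=\langle dm,v\rangle+\sum_i\langle G_\sigma\star(m^T\nu_i),dh_i\rangle$, which it reuses for the momentum-based conditions of Thm.~\ref{thm:rdmm_momentum_optimality} and the energy-conservation argument of suppl.~\ref{sec:energy_conservation}.

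One small remark: your transversality condition $\gamma_i(1)=0$ is correct, since nothing in the objective depends on $h_i(1)$, and it is implicit in the paper's derivation as well (the coefficient of $dh_i(1)$ in the boundary terms), even though the theorem statement records only the $t=0$ condition; together with the backward transport-with-source equation it determines $\gamma_i(0)$, and $\gamma_i(0)+\frac{\delta}{\delta h_i(0)}\on{Reg}(\{h_i(0)\})=0$ is then exactly stationarity with respect to the free initial pre-weights, as you say.
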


\begin{theorem}[Momentum-based RDMM optimality conditions]\label{thm:rdmm_momentum_optimality} The RDMM optimality conditions of Thm.~\eqref{thm:rdmm_image_optimality} can be written entirely \wrt~the momentum (as defined in Thm~\eqref{thm:rdmm_image_optimality}). They are:
\begin{gather}\label{EqREPDiff}
\partial_t \varphi^{-1} + D\varphi^{-1}v=0,~\varphi^{-1}(0,x)=x\,,\\
    \partial_t m + \on{div}(v) m + D v^T(m) + D m(v) =  \sum_{i=0}^{N-1} G_\sigma \star (m \cdot \nu_i ) \nabla h_i,~m(0)=m_0\label{GEPDiff}\,,    
\end{gather}
where $h_{i}(t,x) = h_i(0,x)\circ \varphi(t,x)^{-1}$ and 
subject to the constraints of Eq.~\eqref{eq:vel_momentum_relationship} which define the relationship between the velocity and the momentum. 
\end{theorem}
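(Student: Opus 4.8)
The plan is to obtain the momentum-based system directly from the image-based optimality conditions of Thm.~\eqref{thm:rdmm_image_optimality}, treating the relations in Eq.~\eqref{eq:vel_momentum_relationship} that express $v$, $\nu_i$ and $w_i$ in terms of $m$ as purely algebraic constraints that close the evolution. The flow equation~\eqref{EqREPDiff} is just Eq.~\eqref{EqFlowEquation} for the flow of $v$, and the representation $h_i(t)=h_i(0)\circ\varphi^{-1}(t)$ is the method-of-characteristics solution of the transport equation $\partial_t h_i+\langle\nabla h_i,v\rangle=0$. Hence the only substantive task is to derive the generalized EPDiff equation~\eqref{GEPDiff} by differentiating the definition $m:=\lambda\nabla I+\sum_{i}\gamma_i\nabla h_i$ in time and substituting the adjoint and transport equations.

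First I would apply the product rule to get $\partial_t m=(\partial_t\lambda)\nabla I+\lambda\,\partial_t\nabla I+\sum_i\big[(\partial_t\gamma_i)\nabla h_i+\gamma_i\,\partial_t\nabla h_i\big]$, commute $\partial_t$ with $\nabla$, and insert the four evolution equations. The computation is cleanest if organized around a single generic scalar-momentum pair $(a,b)$, where $a$ is a conserved density with $\partial_t a+\on{div}(av)=S$ and $b$ is a transported scalar with $\partial_t b+\langle\nabla b,v\rangle=0$. Using the divergence product rule $\on{div}(av)=\langle\nabla a,v\rangle+a\,\on{div}(v)$ together with the identity $\nabla\langle\nabla b,v\rangle=(Dv)^T\nabla b+(D\nabla b)\,v$, the substitutions yield, for $r:=a\nabla b$,
\[
\partial_t r+\on{div}(v)\,r+(Dv)^T r+(Dr)\,v=S\,\nabla b\,.
\]
The essential recombination is that the cross term $\langle\nabla a,v\rangle\nabla b$ produced by $\on{div}(av)$ and the Hessian term $a\,(D\nabla b)\,v$ produced by $\partial_t\nabla b$ together reassemble into $(Dr)\,v$, since $Dr=(\nabla b)(\nabla a)^T+a\,D\nabla b$.

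I would then specialize to the RDMM pairs. The photometric pair $(a,b)=(\lambda,I)$ has vanishing source, $S=0$, because its adjoint equation is the pure conservation law $\partial_t\lambda+\on{div}(\lambda v)=0$; it therefore contributes only the EPDiff operator. Each regularizer pair $(a,b)=(\gamma_i,h_i)$ has source $S=G_\sigma\star(m\cdot\nu_i)$, contributing the additional term $G_\sigma\star(m\cdot\nu_i)\,\nabla h_i$. Summing over all pairs and using that $\on{div}(v)(\cdot)$, $(Dv)^T(\cdot)$ and $D(\cdot)\,v$ are linear in their argument collapses the transport contributions into $\on{div}(v)\,m+Dv^T(m)+Dm(v)$ and leaves precisely the right-hand side of~\eqref{GEPDiff}.

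The main difficulty is bookkeeping rather than conceptual: one must check that the transport terms of all $N+1$ pairs assemble simultaneously into the single EPDiff operator, which hinges on the cross-term/Hessian recombination above holding for each pair and on commuting $\partial_t$ with $\nabla$. The genuinely new ingredient relative to standard LDDMM is the source $G_\sigma\star(m\cdot\nu_i)$ in the $\gamma_i$-equation; since it is not a transport term it passes through the manipulation untouched and appears verbatim (up to the factor $\nabla h_i$) on the right-hand side. As a consistency check, setting all regularizer sources to zero recovers the classical momentum-preserving EPDiff~\eqref{system_lddmm_epdiff}.
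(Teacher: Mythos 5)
Your proposal is correct and takes essentially the same route as the paper's proof: differentiate $m=\lambda\nabla I+\sum_{i}\gamma_i\nabla h_i$ in time, substitute the four image-based evolution equations, and use the identities $\on{div}(a v)=\langle\nabla a,v\rangle+a\,\on{div}(v)$ and $\nabla\langle\nabla b,v\rangle=(Dv)^T\nabla b+(D\nabla b)v$ together with the recombination $D(a\nabla b)v=a(D\nabla b)v+\langle\nabla a,v\rangle\nabla b$ to collapse the cross terms into $(Dm)v$. Your generic conserved-density/transported-scalar pair $(a,b)$ with source $S$ is simply a cleaner modularization of the identical computation the paper performs for all $N+1$ pairs simultaneously, with the same sources ($S=0$ for $(\lambda,I)$ and $S=G_\sigma\star(m\cdot\nu_i)$ for $(\gamma_i,h_i)$).
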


For spatially constant pre-weights, we recover EPDiff from the momentum-based RDMM optimality conditions. Instead of advecting $h_i$ via $\varphi(t,x)^{-1}$, we can alternatively advect the pre-weights directly, as $\partial_t h_i + \langle \nabla h_i,v\rangle=0,~h_i(0)=(h_i)_0$. For the image-based and the momentum-based formulations, the velocity field $v$ is obtained by smoothing the momentum, $v=\sum_{i =0}^{N-1} w_i K_{\sigma_i} \star (w_i m)$. 

%At each time step, the pre-weight, $h_i$, is smoothed as $w_i(t,x)=G_{\sigma} \star h_i(t,x)$ to obtain the weight. 

%This is similar to LDDMM, where at each time step, the modified EPDiff equation updates the velocity $v(x,t)$ and advection equation updates the transformation map $\varphi^{-1}(x,t)$. 

%Finally, the overall transformation map can be obtained by interpolating the non-parametric transformation map $\varphi(1)$ with the initial transformation map $\Phi^{-1}$, namely $\varphi(1) \circ \Phi^{-1}$. 

%In comparison with standard LDDMM, the momentum is replaced with $m = \lambda\nabla I + \sum_{i=1}^N \gamma_i \nabla h_i$ and the equation satisfied by $m$ is $\partial_t m + \on{div}(v) m + \ud v^T(m) + \ud m(v) = \sum_{i=1}^N G \star (m^T \nu_i ) \nabla h_i\,, $ with $w_i = G_\sigma \star \omega_i$. Thus, if $w_i = const$, then the rhs term vanishes and the equation reduces to the standard EPDiff.

{\bf Regularization of the Regularizer}

\zy{Given the standard deviations $\sigma_0 < \ldots < \sigma_{N-1}$ of Eq.~\eqref{eq:velocity_relationship}, assigning larger weights to the Gaussians with larger standard deviation will result in smoother (i.e., more regular) and therefore simpler transformations. To encourage choosing simpler transformations we follow~\citep{niethammer2019_cvpr}, where a simple optimal mass transport (OMT) penalty on the weights is used. Such an OMT penalty is sensible as the non-negative weights, $\{w_i\}$, sum to one and can therefore be considered a discrete probability distribution. The chosen OMT penalty is designed to measure differences from the probability distribution assigning all weight to the Gaussian with the largest standard deviation, i.e., $w_{N-1}=1$ and all other weights being zero.} Specifically, the OMT penalty of~\citep{niethammer2019_cvpr} is $ \left|\log \frac{\sigma_{N-1}}{\sigma_{0}}\right|^{-s} \sum_{i=0}^{N-1} \zy{w_{i}}\left|\log \frac{\sigma_{N-1}}{\sigma_{i}}\right|^{s}$, where $s$ is a chosen power. To make the penalty more consistent with penalizing weights for a standard multi-Gaussian regularizer (as our regularizer contains effectively the weights squared) we do not penalize the weights directly, but instead penalize their squares using the same form of OMT penalty. Further, as the regularization only affects the initial conditions for the pre-weights, the evolution equations for the optimality conditions (\ie, the modified EPDiff equation) do not change. Additional regularizers, such as total variation terms as proposed in~\citep{niethammer2019_cvpr}, are possible and easy to integrate into our RDMM framework as they only affect initial conditions. For simplicity, we focus on regularizing via OMT.

{\bf Shooting Formulation}

As energy is conserved (see suppl.~\ref{sec:energy_conservation} for the proof) the momentum-based shooting formulation becomes
\begin{equation}
    m(0)^*,\{h_i(0)^*\} =~\underset{m(0),\{h_i(0)\}}{\text{argmin}}~ \frac 12 \|v(0) \|^2_L + \on{Sim}(I(1),I_1) + \on{Reg}(\{h_i(0)\})\, ,
\label{rdmm_shooting}
\end{equation}
subject to the evolution equations of Thm.~\ref{thm:rdmm_momentum_optimality}. Similarly, the shooting formulation can use the image-based evolution equations of Thm.~\ref{thm:rdmm_image_optimality} where optimization would be over $\lambda(0)$ instead of $m(0)$.

%Since the spatially-varying regularizer is weighted by N different $\sigma$. To encourage the smoothness, the large $\sigma$ is preferred. We take optimal mass transports loss (OMT), 

% Since the spatially-varying regularizer is weighted by N different $\sigma$. To encourage the smoothness, the large $\sigma$ is preferred. Intuitively , additional penality $Reg(\omega)$ is introduced in \eqref{EqMovingMetrics},
% \begin{equation}\label{EqMovingMetrics}
% \min \frac 12 \int_0^1 \sum_{i=1}^N \|\nu_i(t) \|^2_{K_{\sigma_i}} \ud t +  \on{Sim}(I \circ \varphi_1^{-1},J) + \on{Reg}(\omega) \,.
% \end{equation}
% Let ${Reg}(\omega) = \lambda r(\omega)$, everything stays the same except the \eqref{GEPDiff} turns into 
% \begin{equation}
% \partial_t m + \on{div}(v) m + \ud v^T(m) + \ud m(v) =  \sum_{i=1}^N [G_\sigma \star (m \cdot \nu_i ) \nabla \omega_i-\lambda\frac{\delta r}{\delta \omega_i}\nabla \omega_i]\,\label{RGEPDiff},
% \end{equation}
% I.e., we take optimal mass transports loss (OMT), where the largest $\sigma$ achieves the optimality. In this case, $r(\omega)=  \displaystyle\int_0^1 \left|\log \frac{\sigma_{N-1}}{\sigma_{0}}\right|^{-s} \sum_{i=0}^{N-1} w_{i}\left|\log \frac{\sigma_{N-1}}{\sigma_{i}}\right|^{s} \ud t$, where $s$ is the the chosen power. The right term of Eq. \eqref{RGEPDiff} now reads $ \sum_{i=1}^N G_\sigma \star (m \cdot \nu_i -\lambda c_i) \nabla \omega_i\,,$
% where $c_i = \left|\log \frac{\sigma_{N-1}}{\sigma_{0}}\right|^{-s}\left|\log \frac{\sigma_{N-1}}{\sigma_{i}}\right|^{s}$ is $const$ with given $\sigma$.

\begin{figure}[!t]
%\hspace*{-1cm}
\includegraphics[width=1\textwidth]{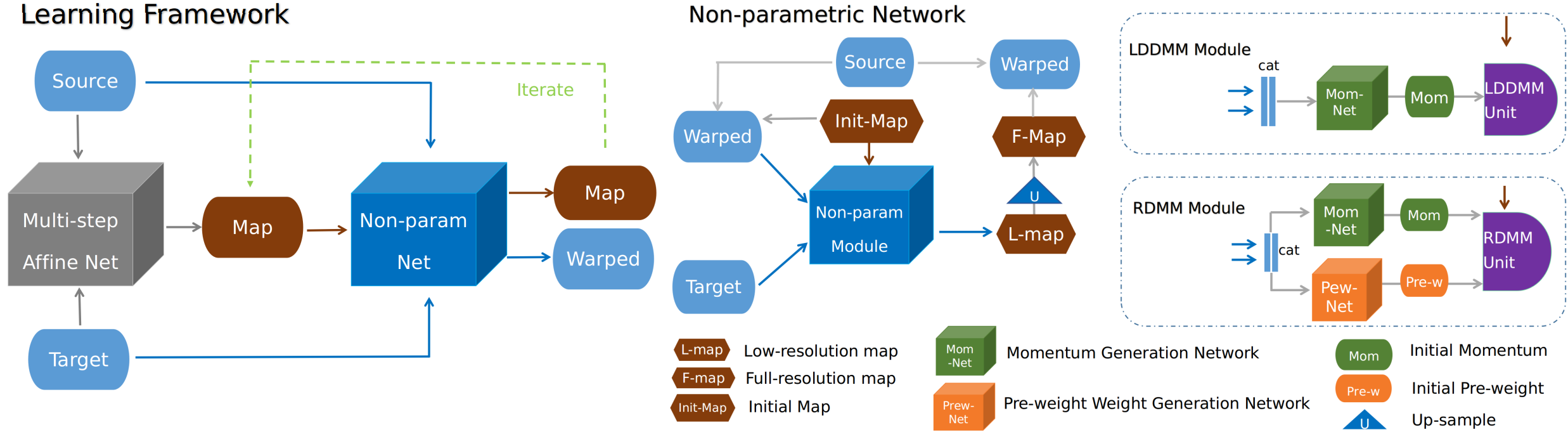}
\caption{Illustration of the learning framework (left) and its non-parametric registration component (right). A multi-step affine-network first predicts the affine transformation map~\cite{shen2019networks} followed by an iterable non-parametric registration to estimate the final transformation map. The LDDMM component uses one network to generate the initial momentum. RDMM also uses a second network to predict the initial regularizer pre-weights. We integrate the RDMM evolution equations at low-resolution (based on the predicted initial conditions) to save memory. The final transformation map is obtained via upsampling. See suppl.~\ref{sec:unit} for the detailed structure of the LDDMM/RDMM units.} 
\label{fig:framework_rdmm}
\end{figure}

\section{Learning Framework}
\label{sec:learning_framework}

The parameters for RDMM, \ie, the initial momentum and the initial pre-weights, can be obtained by numerical optimization, either over the momentum \emph{and} the pre-weights or only over the momentum if the pre-weights are prescribed. Just as for the LDDMM model, such a numerical optimization is computationally costly. Consequentially, various deep-learning (DL) approaches have been proposed to instead predict displacements~\citep{balakrishnan2018unsupervised,cao2018deformable}, stationary velocity~\citep{rohe2017svf} or momentum fields~\citep{yang2017quicksilver,niethammer2019_cvpr}.  Supervised~\cite{yang2017quicksilver,yang2016fast} and unsupervised~\cite{jaderberg2015spatial,de2017end,li2017non,balakrishnan2018unsupervised,dalca2018unsupervised} DL registration approaches exist. All of them are fast as only a regression solution needs to be evaluated at test time and no further numerical optimization is necessary. Additionally, such DL models benefit from learning over an entire population instead of relying only on information from given image-pairs. 
%We follow the framework in ~\cite{shen2019networks} to do two step registration, affine registration and non-parametric registration.

Most non-parametric registration approaches are not invariant to affine transformations based on the chosen regularizers. Hence, for such non-parametric methods, affine registration is typically performed first as a pre-registration step to account for large, global displacements or rotations. Similarly, we make use of a two-step learning framework (Fig.~\ref{fig:framework_rdmm} (left)) learning affine transformations and subsequent non-parametric deformations separately. For the affine part, a multi-step affine network is used to predict the affine transformation following~\citep{shen2019networks}. For the non-parametric part, we use two different deep learning approaches, illustrated in the right part of Fig.~\ref{fig:framework_rdmm}, to predict 1) the LDDMM initial momentum, $m_0$, in Eq.~\eqref{lddmm_shooting} and 2) the RDMM initial momentum, $m_0$, and the pre-weights, $\{h_i(0)^*\}$, in Eq.~\eqref{rdmm_shooting}. Overall, including the affine part, we use two networks for LDDMM prediction and three networks for RDMM prediction.

% We use two different deep learning approaches, illustrated in  Fig.~\ref{fig:framework_rdmm}, to predict 1) the LDDMM initial momentum only (as a special case of RDMM) and 2) the initial momentum and the pre-weights for RDMM. All these networks are trained end-to-end and are initialized by an affine transformation obtained via an affine prediction-network, which accounts for large, global displacements, whereas the LDDMM/RDMM parts account for localized deformations. Overall, we use two networks for LDDMM prediction and three networks for RDMM prediction. The first network is a multi-step affine network to predict the affine transformation following~\citep{shen2019networks}. The second network predicts the initial momentum, $m_0$. The third network predicts the initial pre-weights, $h_i$, defining the regularizer. 

We use low-resolution maps and map compositions for the momentum and the pre-weight networks. This reduces computational cost significantly. The final transformation map is obtained via upsampling, which is reasonable as we assume smooth transformations. We use 3D UNets~\citep{cciccek2016_3d_unet} for momentum and pre-weight prediction. Both the affine and the non-parametric networks can be iterated to refine the prediction results: \ie~ the input source image and the initial map are replaced with the currently warped image and the transformation map respectively for the next iteration.

\zy{During training of the non-parametric part, the gradient is first backpropagated through the differentiable interpolation operator, then through the LDDMM/RDMM unit, followed by the momentum generation network and the pre-weight network.}

{\it Inverse Consistency}: For the DL approaches we follow~\cite{shen2019networks} and compute bidirectional (source to target denoted as $^{st}$ and target to source denoted as $^{ts}$) registration losses and an additional symmetry loss, $\|(\varphi^{s t})^{-1} \circ(\varphi^{t s})^{-1}-i d\|_{2}^{2}$, where $id$ refers to the identity map. This encourages symmetric consistency. %\footnote{Since the computational cost for optimization is much higher, we currently evaluate it through deep learning tasks. Since the addition of the term only changes the initial condition of the optimal solution, to make problem concise, we exclude this term in the following discussion.}.

\section{Experimental Results and Setup}
\label{sec:experimental_results}
 {\bf Datasets}: To demonstrate the behavior of RDMM, we evaluate the model on three datasets: 1) a synthetic dataset for illustration, 2) a 3D computed tomography dataset (CT) of a lung, and 3) a large 3D magnetic resonance imaging (MRI) dataset of the knee from the Osteoarthritis Initiative (OAI).\\
{\it The synthetic dataset} consists of three types of shapes (rectangles, triangles, and ellipses). There is one foreground object in each image with two objects inside and at most five objects outside. Each source image object  has a counterpart in the target image; the shift, scale, and rotations are random. %and the scale of which are sampled from two pre-defined Gaussian distributions while the rotation is randomized. 
We generated 40 image pairs of size $200^2$ for evaluation. Fig.~\ref{fig:syn_2d} shows example synthetic images.\\
{\it The lung dataset} %\footnote{oai.epi-ucsf.org} 
consists of 49 inspiration/expiration image pairs with lung segmentations. Each image is of size $160^3$. We register from the expiration phase to the inspiration phase for all 49 pairs.\\
{\it The OAI dataset} %\footnote{oai.epi-ucsf.org} 
consists of 176 manually labeled MRI from 88 patients (2 longitudinal scans per patient) and 22,950 unlabeled MR images from 2,444 patients. Labels are available for femoral and tibial cartilage. We divide the patients into \zy{training (2,800 pairs),  validation (50 pairs) and testing groups (300 pairs)}, with the same evaluation settings as for the cross-subject experiments in \cite{shen2019networks}.

{\bf Deformation models}: Affine registration is performed before each LDDMM/RDMM registration.

{\it Affine model}:
We implemented a multi-scale affine model solved via numerical optimization and a multi-step deep neural network to predict the affine transformation parameters.

{\it Family of non-parametric models}: We implemented both {\it optimization} and {\it deep-learning} versions of a family of non-parametric registration methods: a vector-momentum based stationary velocity field model (vSVF) ($v(x)$, $w=const$), LDDMM ($v(t,x)$, $w=const$), and RDMM ($v(t,x)$, $w(t,x)$). We use the dopri5 solver using the adjoint sensitivity method~\citep{chen2018neural} to integrate the evolution equations in time. For solutions based on numerical optimization, we use a multi-scale strategy with L-BGFS~\cite{liu1989limited} as the optimizer. For the deep learning models, we compute solutions for a low-resolution map (factor of 0.5) which is then upsampled. We use Adam~\cite{kingma2014adam} for optimization.

{\bf Image similarity measure}: We use multi-kernel Localized Normalized Cross Correlation (mk-LNCC) \cite{shen2019networks}. mk-LNCC computes localized normalized cross correlation (NCC) with different window sizes and combines these measures via a weighted sum.

{\bf Weight visualization}: To illustrate the behavior of the RDMM model, we visualize the estimated standard deviations, \ie the square root of the local variance $\sigma^{2}(x)=\sum_{i=0}^{N-1} w^2_{i}(x) \sigma_{i}^{2}$.

{\bf Estimation approaches}: To illustrate different aspects of our approach we perform three types of RDMM registrations: 1) {\it registration with a pre-defined regularizer} (Sec.~\ref{subsec:rdmm_predefined}), 2) {\it registration with simultaneous optimization of the regularizer}, via optimization of the initial momentum and pre-weights (Sec.~\ref{subsec:rdmm_optimized_regularizer}), and 3) {\it registration via deep learning} \emph{predicting} the initial momentum and regularizer pre-weights (Sec.~\ref{subsec:rdmm_deep_learning}). Detailed settings for all approaches are in the suppl.~\ref{sec:experimental_settings}.

% {\bf Evaluation}: we conduct the experiments from the following aspects:\\
% {\it Registration with a given regularizer}: The initial spatial regularizer in the source image space is given. We perform optimization based pair-wise registration task on synthesis data. Since the regularizer is provided, only initial momentum needs to be optimized.\\
% {\it Registration without knowledge}: we also test the RDMM as a regular registration algorithm in pair-wise registration task. Under this setting, momentum and regularizer are jointly optimized.\\
% {\it Registration with a learnt regularizer}: For large dataset, we implement the learning framework, which directly output the initial regularizer and momentum.

\subsection {Registration with a pre-defined regularizer}
\label{subsec:rdmm_predefined}

To illustrate the base capabilities of our models, we prescribe an initial spatially-varying regularizer in the source image space. %Since the regularizer given the optimization is over the initial RDMM momentum only. 
We show experimental results for pair-wise registration of the synthetic data as well as for the 3D lung volumes.

Fig.~\ref{fig:syn_2d} shows the registration result for an example synthetic image pair. We use small regularization in the blue area and large regularization in the surrounding area. As expected, most of the deformations occur inside the blue area as the regularizer is more permissive there. We also observe that the regularizer is indeed advected with the image. For the real lung image data we use a small regularizer inside the lung (as specified by the given lung mask) and a large regularizer in the surrounding tissue. Fig.~\ref{fig:lung} shows that most of the deformations are indeed inside the lung area while the deformation outside the lung is highly regularized as desired. We evaluate the Dice score between the warped lung and the target lung, achieving $95.22\%$ on average (over all inhalation/exhalation pairs). Fig.~\ref{fig:lung} also shows the determinant of the Jacobian of the transformation map $J_{\varphi^{-1}}(x) := |D{\varphi^{-1}}(x)|$ (defined in target space): the lung region shows small values (illustrating expansion) while other region are either volume preserved (close to 1) or compressed (bigger than 1). Overall the deformations are smooth.

\begin{figure}[!ht]
  \begin{center}
    \includegraphics[width=1.0\textwidth]{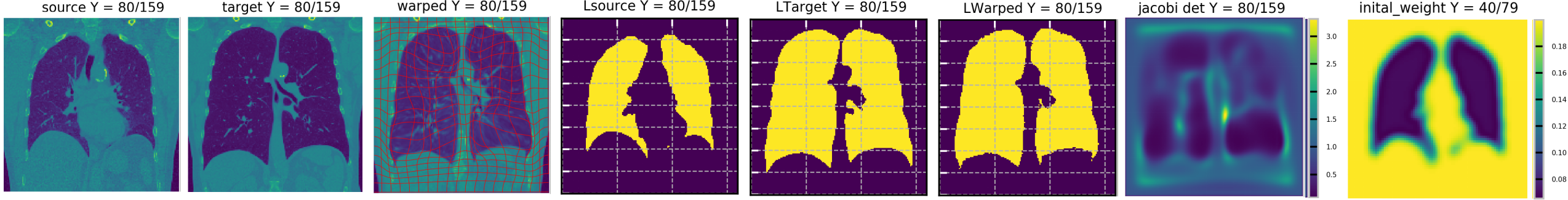}
  \end{center}
  \caption{RDMM lung registration result with a pre-defined regularizer. Lung images at expiration (source) are registered to the corresponding  inspiration (target) images. Last column: Inside the lung a regularizer with small standard deviation ($\sigma_i= \{0.04, 0.06, 0.08\}$ , $h_0^2=\{0.1, 0.4, 0.5\}$) and outside the lung with large standard deviation is used ($\sigma_i= \{0.2\}$, $h_0^2=\{1.0\}$). Deformations are largely inside the lung, the surrounding tissue is well regularized as expected. Columns 1 to 3 show results in image space while columns 4 to 6 refer to the results in label space. The second to last column shows the determinant of the Jacobian of the spatial transformation, $\varphi^{-1}$.\vspace{-0.5cm} }
  \label{fig:lung}
\end{figure}

% \begin{table}[!htb]
% \centering
% \begin{tabular}{@{}c@{}c@{}}
%      \includegraphics[width=0.7\textwidth]{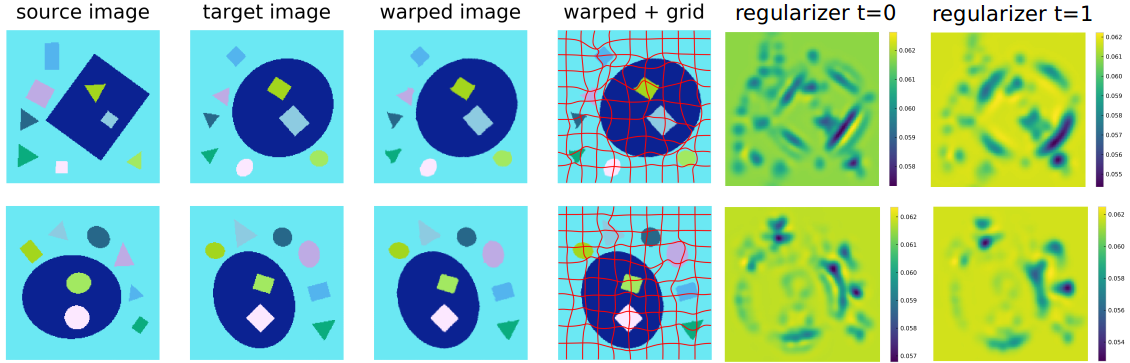}%\caption{a}
%      & \includegraphics[width=0.25\linewidth]{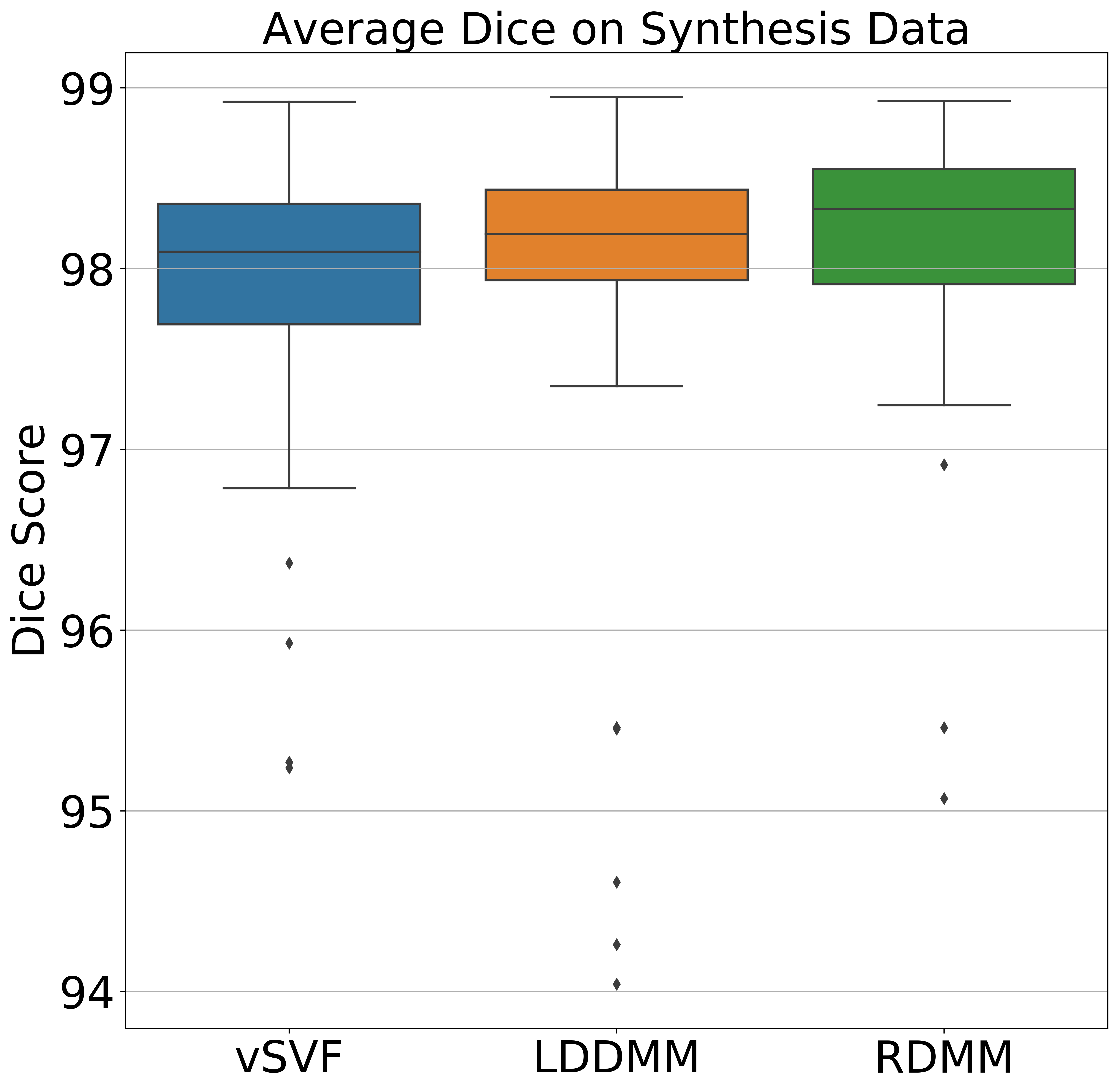}\\%\caption{b}\\
     
% \end{tabular}
% \end{table}

% \begin{figure}[!h]
% %\hspace*{-1cm}
% \includegraphics[width=0.8\textwidth]{images/rddmm_sythn_opt.png}
% \caption{Flow chart of LDDMM (left) and our RDMM (right). In general, in LDDMM, only the transformation map flows while in RDMM, both the transformation map and the regularizer flow according to the velocity field. In LDDMM, for each time step, the momentum is updated by the EPDiff equation while by the Generalized EPDiff equation in RDMM. }
% \label{fig:rdmm_opt_synth}
% \end{figure}

\subsection{Registration with an optimized regularizer}
\label{subsec:rdmm_optimized_regularizer}

\begin{figure}[!h]
   \begin{minipage}{0.7\textwidth}
     \includegraphics[width=1\textwidth]{images/rddmm_sythn_opt.png}
    \caption{Illustration of the RDMM registration results with an optimized regularizer on the synthetic dataset. All objects are warped from the source image space to the target image space. The last two columns show the regularizer ($\sigma(x)$) at $t=0$ and $t=1$ respectively. }
    \label{fig:rdmm_opt_synth}
   \end{minipage}\hfill
   \begin{minipage}{0.25\textwidth}
   \vspace{-0.5em}
     \centering
     \includegraphics[width=1\linewidth]{images/syth_boxplot.png}
     \caption{Average Dice scores for all objects. Left to right: SVF, LDDMM and RDMM.}\label{fig:syth_boxplot}
   \end{minipage}
\end{figure}

In contrast to the experiments in Sec.~\ref{subsec:rdmm_predefined}, we jointly estimate the initial momentum \emph{and} the initial regularizer pre-weights  for pairwise registration. We use the synthetic data and the knee MRIs.

Fig.~\ref{fig:rdmm_opt_synth} shows that the registration warped every object in the source image to the corresponding object in the target image. The visualization of the initial regularizer shows that low regularization is assigned close to object edges making them locally deformable. The visualizations of the regularizer at the initial time point ($t=0$) and the final time point ($t=1$) show that it deforms with the image. That low regularization values are localized is sensible as the OMT regularizer prefers spatially sparse solutions (in the sense of sparsely assigning low levels of regularity). If the desired deformation model is piecewise constant our RDMM model could readily be combined with a total variation penalty as in~\citep{niethammer2019_cvpr}. Fig.~\ref{fig:syth_boxplot} compares average Dice scores for all objects for vSVF, LDDMM and RDMM separately. They all achieve high and comparable performance indicating good registration quality. But only RDMM provides additional information about local regularity.

We further evaluate RDMM on 300 images pairs from the OAI dataset. The {\it optimization methods} section of Tab.~\ref{tab:per_compare} compares registration performance for different optimization-based algorithms. RDMM achieves \zy{high performance}. While RDMM is in theory diffeomorphic, we observe some foldings, whereas no such foldings appear for LDDMM and SVF. This is likely due to inaccuracies when discretizing the evolution equations and when discretizing the determinant of the Jacobian. Further, RDMM may locally exhibit stronger deformations than LDDMM or vSVF, especially when the local regularization (via OMT) is small, making it numerically more challenging. Most of the folds appear at the \zy{image boundary} (and are hence likely due to boundary discretization artifacts) or due to anatomical inconsistency in the source and target images, where large deformations may be estimated.

\begin{figure}
  
  \begin{minipage}[b]{0.425\textwidth}
    \centering
\scalebox{0.57}{
\tabcolsep=-0.04cm
\begin{tabular}{|cccc|}
  %\toprule
  \hline
\multirow{2}{*}{Method} &
\multicolumn{2}{c}{OAI Dataset} &
\multicolumn{1}{c|}{} \\
 & Dice & Folds& Time (s) \\\hline%\midrule
---------&  Affine Methods &----------&-------- \\
affine-NiftyReg    & 30.43 (12.11) &0  & 45\\
affine-opt  & 34.49 (18.07) &0  & 8\\     
affine-net & \textbf{44.58} (7.74) &0  & 0.20  \\

-----------& Optimization Methods &-------&----------\\
Demons\cite{vercauteren2009diffeomorphic,vercauteren2008symmetric}
& 63.47 (9.52) &0.56  &114\\
SyN\cite{avants2009advanced,avants2008symmetric}
& 65.71 (15.01) &0  &1330\\
NiftyReg-NMI\cite{ourselin2001reconstructing,modat2014global,rueckert1999nonrigid,modat2010fast}
& 59.65 (7.62) &0  &143\\
%NiftyReg-lncc-20  &0.8196 & 0.5814\\
NiftyReg-LNCC   & 67.92 (5.24) &35.19 &270\\
vSVF-opt &67.35 (9.73)  &0  &79\\
LDDMM-opt &67.72(8.94)  & 0 & 457\\
RDMM-opt  & \textbf{68.18(8.36)} &17.37 &627\\
----------& Learning-based Methods & -------&------

---- \\
VoxelMorph\cite{dalca2018unsupervised}(with aff) &66.08 (5.13) &3.31 &0.31\\
vSVF-net \cite{shen2019networks}& 67.59 (4.47) & 0.39 &0.62\\
LDDMM-net &67.63(4.51) &0& 0.85 \\
RDMM-net &\textbf{67.94(4.40)} &0.47 &1.1\\
% AVSM   &0.8245 (0.027) &0.  & \textbf{0.6841} (None) & 177  & 2.7s(2iter),3.4s(6iter)\\
%\bottomrule
\hline
\end{tabular}}
\end{minipage}
\begin{minipage}[b]{0.575\textwidth}
    \centering
    \includegraphics[width=\textwidth]{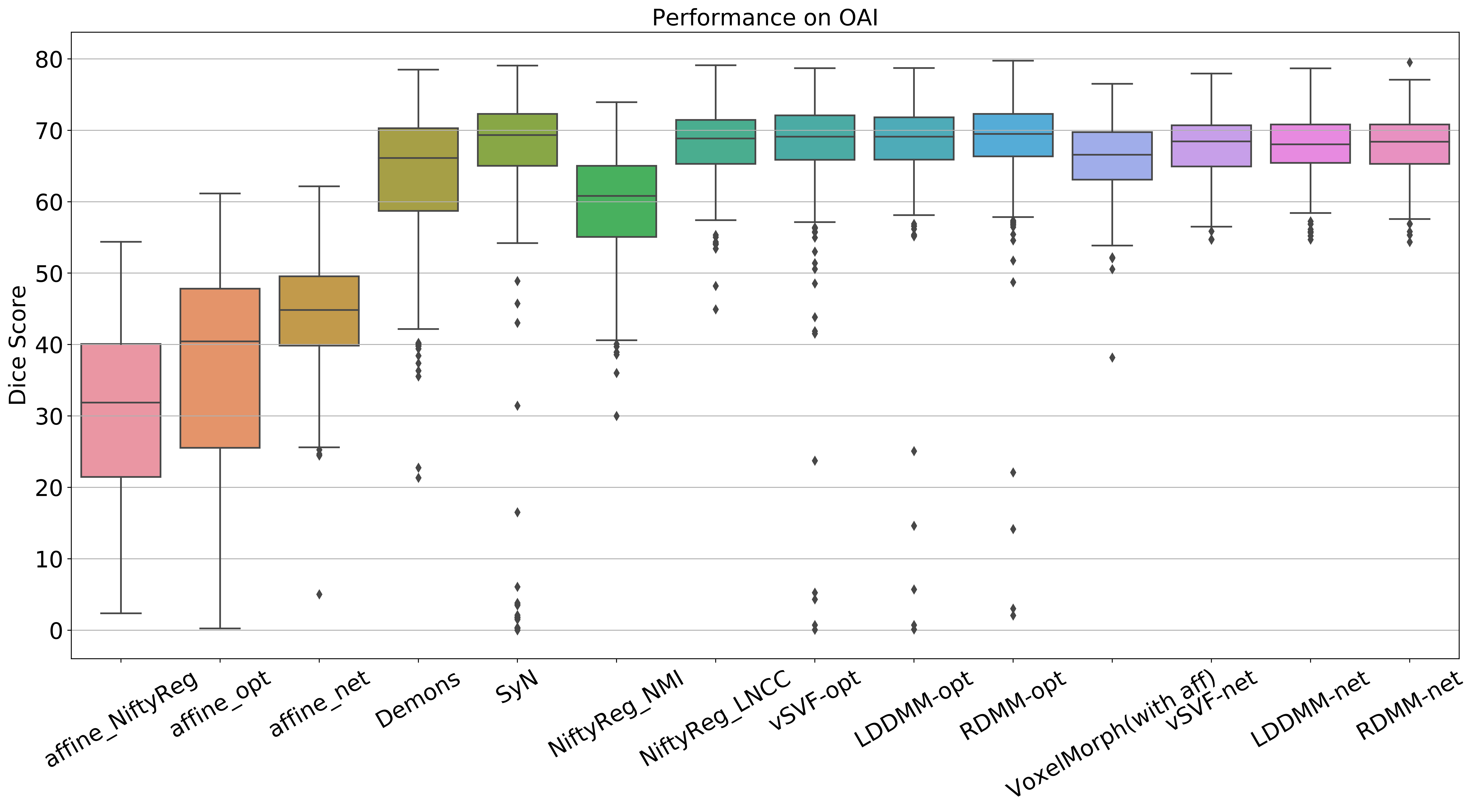}
    \par\vspace{-2.25cm}
  \end{minipage}%
\caption{\label{tab:per_compare} Comparison of registration methods for cross-subject registrations on the OAI dataset based on Dice scores. \textit{-opt} and \textit{-net} refer to optimization- and DL-based methods respectively. For all DL methods, we report performance after two-step refinement. \textit{Folds} refers to the absolute value of the sum of the determinant of the Jacobian in the folding area (\ie, where the determinant of the Jacobian is negative); \textit{Time} refers to the average registration time for a single image pair.\vspace{-0.25cm}}
\end{figure}

\begin{figure}[!t]
  \begin{center}
    \includegraphics[width=1.0\textwidth]{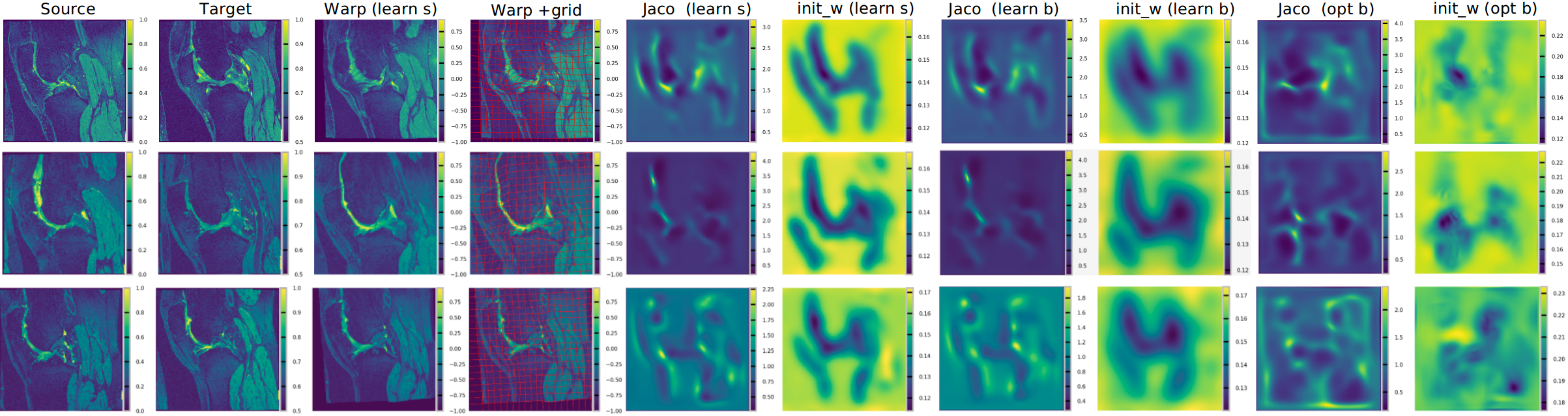}
  \end{center}
  \caption{\label{fig:oai_img}Illustration of RDMM registration results on the OAI dataset. "* s" and "* b" refer to the regularizer with $\sigma$ in $G_\sigma$ set to $0.04$ and $0.06$ respectively; "learn *" and "opt *" refer to a learnt regularizer and an optimized one respectively; "Jaco" refers to the absolute value of the determinant of the Jacobian; "init\_w" refers to the initial weight map of the regularizer (as visualized via $\sigma(x)$). The first four columns refer to registration results in image space.\vspace{-0.5cm}}% \footnote{The visualization results are similar among different settings. We show results from a learnt regularizer with small $G_\sigma$.}.} 
\end{figure}

\subsection{Registration with a learnt regularizer via deep learning}
\label{subsec:rdmm_deep_learning}

Finally, we evaluate our learning framework for non-parametric registration approaches on the OAI dataset. For vSVF, we follow~\citep{shen2019networks} to predict the momentum. We implement the same approach for LDDMM, where the vSVF inference unit is replaced by one for LDDMM (\ie, instead of advecting via a stationary velocity field we integrate EPDiff). Similarly, for RDMM, the inference unit is replaced by the evolution equation for RDMM and we use an additional network to predict the regularizer pre-weights. Fig.~\ref{tab:per_compare} shows the results. The non-parametric DL models achieve comparable performance to their optimization counterparts, but are much faster, while learning-based RDMM simultaneously predicts the initial regularizer which captures aspects of the knee anatomy. Fig.~\ref{fig:oai_img} shows the determinant of the Jacobian of the transformation map. It is overall smooth and folds are much less frequent than for the corresponding optimization approach, because the DL model penalizes transformation asymmetry. Fig.~\ref{fig:oai_img} also clearly illustrates the benefit of training the DL model based on a large image dataset: compared with the optimization approach (which only works on individual image pairs), the initial regularizer predicted by the deep network captures anatomically meaningful information much better: the bone (femur and tibia) and the surrounding tissue show large regularity.%\vspace{-0.25cm}

%Whereas the optimized initial weight map is driven by minimizing a pair-wise loss, the deep version of RDMM optimizes over a large image dataset and can therefore better learn common patterns of regularity.

\section{Conclusion and Future Work}
We introduced RDMM, a generalization of LDDMM registration which allows for spatially-varying regularizers advected with a deforming image. In RDMM, both the estimated velocity field and the estimated regularizer are time- and spatially-varying. We used a variational approach to derive shooting equations which generalize EPDiff and allow the parameterization of RDMM using only the initial momentum and regularizer pre-weights. We also prove that diffeomorphic transformation can be obtained for RDMM with sufficiently regular regularizers. Experiments with pre-defined, optimized, and learnt regularizers show that RDMM is flexible and its solutions can be estimated quickly via deep learning.\\
Future work could focus on numerical aspects and explore different initial constraints, such as total-variation constraints, depending on the desired deformation model. Indeed, a promising avenue of research consists in learning regularizers which include more physical/mechanical a-priori information in the deformation model. For instance, a possible first step in this direction consists in parameterizing non-isotropic kernels to favor deformations in particular directions. %You do not know if this is easy, is it? ->\zy{Note that while we focus on image registration problems, it is easy to generalize the RDMM theory into normalizing flows problems, which could then be used for generative models.}\\

{\bf Acknowledgements} Research reported in this work was supported by the National Institutes of Health (NIH) and  the  National  Science  Foundation  (NSF)  under  award numbers  NSF  EECS-1711776  and  NIH  1R01AR072013. The content is solely the responsibility of the authors and does not necessarily represent the official views of the NIH or the NSF. We would also like to thank Dr. Ra\'ul San Jos\'e Est\'epar for providing the lung data.

%\newpage
%\section*{References}
{\small
%\newpage
\bibliographystyle{abbrv}

}
\newpage
\setcounter{page}{1}
\section{Supplementary Material}

     This supplementary material provides additional details illustrating the proposed approach. We start by deriving the form of the smoothing kernel for the RDMM model in Sec.~\ref{sec:kernel}. Based on this kernel form we can then detail the derivation of the RDMM optimality conditions in Sec.~\ref{sec:optimality_conditions}. In Sec.~\ref{sec:energy_conservation}, we prove that the regularization energy is conserved over time, which allows formulating our RDMM shooting strategy based on initial conditions only. Sec.~\ref{sec:mathematical_properties} details the good theoretical behavior of our model.
 Sec.~\ref{sec:initial_reg} describes the optimization/training strategy with regard to the the initial pre-weight regularization. Sec.~\ref{sec:unit} visualizes the inference process of the LDDMM/RDMM method. Sec.~\ref{sec:lp_derivation} analyzes the behavior of the OMT term. Lastly, Sec.~\ref{sec:experimental_settings} details the settings of our experiments.

\subsection{Variational Derivation of the Smoothing Kernel}
\label{sec:kernel}

The derivation of our RDMM model makes use of a smoothing kernel of the form $K=\sum_{i=0}^{N-1} w_i K_{\sigma_i}w_i$. This kernel form is a direct consequence of the definition of our variational definition of the smoothing kernel. 

Recall that similar to~\citep{niethammer2019_cvpr} we define
\begin{equation}
\|v \|^2_L := \inf \left\{ \sum_{i=0}^{N-1} \|\nu_i \|^2_{V_i}\,| \, v = \sum_{i=0}^{N-1} w_i \nu_i \right\}\,,
\end{equation}
for a given velocity field $v$. To compute an explicit form of the norm $\|v\|_L^2$ we need to solve the constrained optimization problem of this definition. Specifically, we introduce the vector-valued Lagrange multiplier $m$. Thus the Lagrangian, $\mathcal{L}$, becomes\footnote{We multiply the objective function by $\frac{1}{2}$ for convenience. This does not change the solution.}
\begin{align}
    \begin{split}
         \mathcal{L}(\{\nu_i\},m)  &=  \sum_{i=0}^{N-1} \frac 12\|\nu_i\|_{V_i}^2 - \langle m, w_i \nu_i -v \rangle\\
         & = \sum_{i=0}^{N-1} \frac 12 \langle L_i \nu_i,\nu_i \rangle - \langle m,  w_i\nu_i-v\rangle\,.
    \end{split}
\end{align}
The variation of the Lagrangian is
\begin{equation}
         \delta \mathcal{L}
(\{\nu_i\},m;\{\delta \nu_i\},\delta m) = \sum_{i=0}^{N-1}\langle L_i \nu_i,\delta \nu_i \rangle - \langle \delta m,  w_i\nu_i-v\rangle-\langle m,  w_i \delta \nu_i\rangle\,.
\end{equation}
By collecting all the terms, the optimality conditions (\ie, where the variation vanishes) are 
\begin{equation}
    L_i\nu_i=w_i m,~\forall i \quad\text{and}\quad v = \sum_{i=0}^{N-1} w_i\nu_i.
\end{equation}
Hence, we can write the norm $\|v(t) \|^2_L$ in the following form :
\begin{equation}
        \|v \|^2_L  =\sum_{i=0}^{N-1} \langle L_i \nu_i^*, \nu_i^* \rangle
        = \sum_{i=0}^{N-1} \langle w_i m, L_i^{-1} w_i m\rangle
        = \sum_{i=0}^{N-1} \langle m, w_i K_{\sigma_i} \zy{\star(w_i m)}\rangle \,.
\end{equation}
Consequentially, the associated kernel is $K = \sum_{i=0}^{N-1} w_i K_{\sigma_i} w_i$. Assuming the kernel can be written as a convolution, we can therefore express the velocity as:
\begin{equation}
v=\sum_{i=0}^{N-1} w_i \nu_i =  \sum_{i=0}^{N-1} w_{i} K_{\sigma_i} \star (w_{i}m)\,.
\end{equation}

\subsection{Optimality Conditions}
\label{sec:optimality_conditions}

In this section we derive the RDMM optimality conditions. Both for the image-based and the momentum-based cases. Recall that the overall registration energy of RDMM can be written as:
\begin{equation}\label{A_reg_energy}
E(v, I,\{h_i\}) =  \frac 12 \int_0^1 \|v(t) \|^2_L \ud t + \on{Sim}(I(1),I_1)\ +\on{Reg}(\{h_i(0)\})
\end{equation}
under the constraints\footnote{In this section, to simplify the notation, we denote the partial derivative $\partial_t$ by only the subscript $_t$.}
\begin{align}\label{A_reg_condition}
\begin{cases}
I_t + \langle \nabla I , v \rangle = 0,~I(0)=I_0,\\
(h_i)_{t} + \langle \nabla h_i , v \rangle = 0,h_i(0)=(h_i)_0,\\
w_i=G_{\sigma}\star h_i,\\\
\nu_i = K_{\sigma_i} \star(w_i m),\\
v = \sum_{i=0}^{N-1} w_i \nu_i \,.\\
\end{cases}
\end{align}

{\bf Proof of Thm.~\eqref{thm:rdmm_image_optimality}}\\
We compute the variations of the Lagrangian, $\mathcal{L}$ to the energy (i.e., where constraints are added via Lagrangian multipliers) with respect to $v$, $\lambda$, $I$, $\{h_i\}$ and $\{\gamma_i\}$:
\begin{align} \label{A3}
\begin{split}
 \delta \mathcal{L} =& \frac{\partial}{\partial \epsilon} \mathcal{L}\left.(v+\epsilon d v, I+\epsilon d I,\{h_i+\epsilon dh_i\}, \lambda+\epsilon d \lambda, \{\gamma_i+\epsilon d \gamma_i)\}\right|_{\epsilon=0} \\
 = &\int_0^1   \frac 12 \delta( \|v(t) \|^2_L)  -\left\langle d\lambda,I_t+ (DI) v\right\rangle
 -\left\langle\lambda, d I_{t}+(D d I) v+(D I) d v\right\rangle\\
 &-\sum_{i=0}^{N-1}\left\{\langle d\gamma_i, h_{it}+ (D h_i) v \right\rangle+ \left\langle\gamma_i, d h_{it}+(D d h_i) v+(D h_i) d v\rangle\right\} \ud t\\
 & +\left\langle\frac{\delta}{\delta I(1)}\on{Sim}(I(1),I_1),dI(1)\right\rangle + \sum_{i=0}^{N-1}\left\langle\frac{\delta}{\delta h_i(0)}\on{Reg}(\{h_i(0)\}),dh_i(0)\right\rangle\,.
 \end{split}
\end{align}

We use 
\begin{equation}\label{A4}
\int_{0}^{1}\left\langle\lambda, d I_{t}\right\rangle d t=\int_{0}^{1}\left\langle-\lambda_{t}, d I\right\rangle d t+\langle\lambda, d I\rangle_{0}^{1}\,.
\end{equation}
According to Green's theorem and assuming $v$ vanishes on the boundary, we get
\begin{equation}\label{A5}
\langle\lambda,(D d I) v\rangle=\langle- d i v(\lambda v), d I\rangle+\int_{\partial \Omega} d I \lambda v \cdot d S=\langle- d i v(\lambda v), d I\rangle\,.
\end{equation}
Similarly, we have
\begin{equation}\label{A6}
\int_{0}^{1}\left\langle\gamma_i, d h_{it}\right\rangle d t=\int_{0}^{1}\left\langle-\gamma_{it}, d h_i\right\rangle d t+\langle\gamma_i, d h_i\rangle_{0}^{1}\\
\end{equation}
\begin{equation}\label{A7}
\langle\gamma_i,(D d h_i) v\rangle=\langle- d i v(\gamma_i v), d h_i\rangle+\int_{\partial \Omega} d h_i \gamma_i v \cdot d S=\langle- d i v(\gamma_i v), d h_i\rangle \,.
\end{equation}
Now, Eq.~\eqref{A3} reads
\begin{align}
\begin{split}
 \delta E = &\int_0^1   \frac 12 \delta( \|v(t) \|^2_L)  -\left\langle d\lambda,I_t+ (D I ) v \right\rangle
 +\left\langle \lambda_t + div(\lambda v), dI\right\rangle\\
 &+\sum_{i=0}^{N-1}\left\{-\langle d\gamma_i, h_{it}+ (Dh_i) v \right\rangle + \left\langle\gamma_{it} + div(\gamma_i v), dh_i\rangle\right\}  -\left\langle \lambda \nabla I + \sum_{i=0}^{N-1}\gamma_i\nabla h_i, dv\right\rangle \ud t \\
 & -\langle \lambda, dI \rangle^1_0 - \sum_{i=0}^{N-1}\langle\gamma_i, dh_i\rangle^1_0\\
 &+\left\langle\frac{\delta}{\delta I(1)}\on{Sim}(I(1),I_1),dI(1)\right\rangle + \sum_{i=0}^{N-1}\left\langle\frac{\delta}{\delta h_i(0)}\on{Reg}(\{h_i(0)\}),dh_i(0)\right\rangle\,.
\end{split}
\end{align}

We first collect $dI(1)$ and $dh_i(0)$ to obtain the final condition on $\lambda$ and the initial condition on $\gamma$:
\begin{align}
\begin{cases}
-\lambda(1) +  \frac{\delta}{\delta I(1)}\on{Sim}(I(1),I_1) = 0,\\
\gamma_i(0) +\frac{\delta}{\delta h_i(0)}\on{Reg}(\{h_i(0)\}) = 0\,.
\end{cases}
\end{align}

Next, we work on $\int_0^1   \frac 12 \delta( \|v(t) \|^2_L) \ud t$.  Remember,  we have $
v=K \star m \stackrel{\mathrm{def.}}{=} \sum_{i=0}^{N-1} w_{i}\nu_i$, where $ \nu_i = K_{\sigma_i} \star(w_i m), \quad w_{i} \geq 0
$, thus

\begin{equation}\label{eq:decomp_dvK}
\int_0^1   \frac 12 \delta( \|v(t) \|^2_L)\ud t =\int_0^1  \frac 12\langle dm,v\rangle  + \frac 12
\langle m,\sum_{i=0}^{N-1} w_i d\nu_i + \nu_i dw_i  \rangle \ud t\,.
%\langle m,\sum_i^N  dw_i K_{\sigma_i} \star (w_i m) + w_i K_{\sigma_i} \star (dw_i m + w_i dm)\rangle \ud t  \,.
\end{equation}

% \begin{equation}
% \int_{0}^{1}\left\langle\lambda, d I_{t}\right\rangle d t=\int_{0}^{1}\left\langle-\lambda_{t}, d I\right\rangle d t+\langle\lambda, d I\rangle_{0}^{1}
% \end{equation}
% \begin{equation}
% \langle\lambda,(D d I) v\rangle=\langle- d i v(\lambda v), d I\rangle+\int_{\partial \Omega} d I \lambda v \cdot d S=\langle- d i v(\lambda v), d I\rangle
% \end{equation}
Note that for radially symmetric kernels (such as Gaussian kernels) $K=\overline{K}$,  $\langle  K * a, b\rangle  = \langle \overline{K} * b\rangle $
\begin{align}
\begin{split}
\langle K * a, b\rangle &=\int_{x=-\infty}^{\infty}\left(\int_{y=-\infty}^{\infty} K(x-y) a(y)\right) b(x) d x \\ &=\int_{y=-\infty}^{\infty} a(y) \int_{x=-\infty}^{\infty} K(x-y) b(x) d x d y \\ &=\int_{y=-\infty}^{\infty} a(y) \int_{x=-\infty}^{\infty} \underbrace{\overline{K}(y-x)}_{\overline{K}(x) : K(-x)} b(x) d x d y \\ &=\int_{x=-\infty}^{\infty} a(x) \int_{y=-\infty}^{\infty} \overline{K}(x-y) b(y) d y d x \\ &=\langle a, \overline{K} * b\rangle\,.
\end{split}
\end{align}
Thus, we can get
\begin{align}\label{eq:decomp_dv_m}
\begin{split}
\frac 12 \langle \nu_i dw_i +  w_i d\nu_i , m \rangle=& \frac 12 \langle  dw_i K_{\sigma_i} \star (w_i m) + w_i K_{\sigma_i} \star (dw_i m + w_i dm),m\rangle \\
=&\frac 12 \langle m^TK_{\sigma_i}\star(w_i m), dw_i\rangle  + \frac 12\langle  w_i m , K_{\sigma_i} \star (dw_i m)\rangle  + \frac 12 \langle  w_i m, K_{\sigma_i} \star (w_i dm)\rangle \\
=&\frac 12  \langle m^T K_{\sigma_i}\star(w_i m), dw_i\rangle  + \frac 12\langle m^T K_{\sigma_i} \star  (w_i m) ,dw_i\rangle  + \frac 12\langle  w_i K_{\sigma_i} \star (w_i m), dm\rangle \\
%=&  \langle G\star(mK_{\sigma_i} \star(w_im)), dh_i\rangle  + \frac 12 \langle v_i,dm\rangle \\
=&  \langle G_{\sigma}\star(m^T\nu_i), dh_i\rangle  + \frac 12 \langle w_i \nu_i,dm\rangle \,.
\end{split}
\end{align}

Substituting Eq.~\eqref{eq:decomp_dv_m} into Eq.~\eqref{eq:decomp_dvK}, we get 
\begin{equation}
 \int_0^1   \frac 12 \delta( \|v(t) \|^2_L)\ud t  =\int_0^1  \langle dm,v\rangle  +  \sum_{i=0}^{N-1}\langle G_{\sigma}\star(m^T\nu_i), dh_i\rangle \ud t\,.
\end{equation}

% \begin{flalign*}
% dL = & \langle v_i,dm\rangle  + \langle G\star(m\nu_i), dh_i\rangle \\
% \quad &+ \langle d\lambda, I_t + \nabla I^T v\rangle  + \langle -\lambda_t -div(\lambda v), dI\rangle  + \langle \lambda \nabla I, dv\rangle \\
% \quad &+ \langle d\lambda^h, h_t + Dhv\rangle  + \langle  -\lambda_t^h - div(\lambda^h v),dh\rangle  + \langle Dh^T\lambda^h, dv\rangle  
% \end{flalign*}

Next, we decompose the $\langle\lambda \nabla I + \sum_{i=0}^{N-1} \gamma_i \nabla h_i, dv\rangle$ terms. We define the momentum,  $m = \lambda \nabla I + \sum_{i=0}^{N-1} \gamma_i \nabla h_i $.
%and let $q=m$. \mnl{Why is $q=m$? What is $q$?} We obtain
\begin{flalign*}
&\langle \lambda \nabla I + \sum_{i=0}^{N-1} \gamma_i \nabla h_i, dv\rangle 
= \langle {m}, \sum_{i=0}^{N-1} dw_i K_{\sigma_i} \star(w_i m) + w_i K_{\sigma_i} \star ( dw_i m + w_i dm)\rangle \\
=&\sum_{i=0}^{N-1} \langle {m}^T K_{\sigma_i}\star(w_i m), dw_i\rangle  + \langle m^T K_{\sigma_i}*(w_i {m}),dw_i\rangle  + \langle w_i K_{\sigma_i} \star(w_i{m}), dm\rangle \\
= & \sum_{i=0}^{N-1} \langle G_{\sigma}\star[{m}^TK_{\sigma_i} \star(w_im) + m^T K_{\sigma_i}\star(w_i{m})], dh_i\rangle  + \langle w_i K_{\sigma_i} \star(w_i {m}),dm\rangle \\
= &\sum_{i=0}^{N-1} 2\langle G_{\sigma}\star(m^T\nu_i), dh_i\rangle  + \langle w_i \nu_i,dm\rangle \,.
\end{flalign*}

Now, we can collect the variation $dh_i$ for $h_i$ and $dm$ for $m$ and obtain the optimality conditions 
\begin{align}
- G_{\sigma}\star(m^T\nu_i) + \gamma_{it} + div(\gamma_i v) &=0,\\
 v- \sum_{i=0}^{N-1} w_i \nu_i &=0\,.
\end{align}

% We introduce the momentum as $m = -q$ 
% $G\star \left [m K_{\sigma_i}\star(w_i m) + m K_{\sigma_i} \star(w_i q) + q K_{\sigma_i} \star (w_i m) \right]-\lambda^h_t -div(\gammav)=0$

Finally, we get the optimality conditions for image-based RDMM derived from Eq.~\eqref{A_reg_energy} and Eq.~\eqref{A_reg_condition}:
% \mnr{Please add conditions on $\lambda_i$ and $\gamma_i$. Either for $t=0$ or $t=1$ depending on what your derivation gives you.}
\begin{align}
\begin{cases}
 I_t + \langle \nabla I , v \rangle = 0,~I(0)=I_0\,,\\
 h_{it} + \langle \nabla h_i , v \rangle = 0,~h_i(0)=(h_i)_0\,,\\
 \lambda_t + \on{div}(\lambda v) = 0\,,\\
  \gamma_{it} + \on{div}(\gamma_i v) = G_{\sigma}\, \star (m \cdot \nu_i)\, ,\\
  -\lambda(1) +  \frac{\delta}{\delta I(1)}\on{Sim}(I(1),I_1) = 0\, ,\\
\gamma_i(0) +\frac{\delta}{\delta h_i(0)}\on{Reg}(\{h_i(0)\}) = 0\, ,
\end{cases}
\label{eq:app_image_based_optimality}
\end{align}
where $\nu_i = K_{\sigma_i} \star (w_i m)$ and $m=\lambda \nabla I + \sum_{i=0}^{N-1} \gamma_i \nabla h_i$.\\

{\bf Proof of Thm.~\eqref{thm:rdmm_momentum_optimality}}
We now derive the optimality conditions for the momentum-based formulation of RDMM. We start by taking the time derivative of the momentum and obtain
\begin{eqnarray}
-m_t &=& -(\lambda \nabla I)_t - (\sum_{i=0}^{N-1}\nabla h_i\gamma_i)_t\\
&=& -\lambda_t \nabla I - \lambda \nabla I_t - \sum_{i=0}^{N-1} \{ \gamma_{it} \nabla h_i + \gamma_i \nabla(h_{it})\}\,.
\end{eqnarray}
By substituting the time derivatives $\lambda_t$, $I_t$, $\gamma_{it}$, and $h_{it}$ from Eq.~\eqref{eq:app_image_based_optimality} we obtain
\begin{equation}
-m_t = \text{div}(\lambda v)\nabla I + \lambda \nabla (\nabla I^T v) + \sum_{i=0}^{N-1} \left[ \text{div}(\gamma_i v) -G_{\sigma}\star(m^T \nu_i) \right] \nabla h_i
+\gamma_i \nabla(\nabla h_i^T v)\,.
\label{eq:app_m_opt_t_derivative}
\end{equation}
Using the following two relations,
\begin{equation}
    \text{div}(\lambda v) = \nabla \lambda^T v + \lambda \text{div}(v)\quad\text{and}\quad \nabla(\nabla I^Tv) = HIv + (Dv)^T\nabla I
\end{equation}
where $D$ denotes the Jacobian and $H$ the Hessian, we can rewrite Eq.~\eqref{eq:app_m_opt_t_derivative} as
\begin{eqnarray}
-m_t &=& ((\nabla \lambda)^T v + \lambda \text{div}(v))\nabla I + \lambda(HIv+(Dv)^T\nabla I)\\
&+& \sum_{i=0}^{N-1} \left [(\nabla \gamma_i)^T v +\gamma_i \text{div}(v) -G_{\sigma}\star(m^T \nu_i) \right] \nabla h_i + \gamma_i (Hh_i v+ (Dv)^T \nabla h_i\\
&=& (\lambda \nabla I + \sum_{i=0}^{N-1} \gamma_i \nabla h_i)\text{div}(v)+(Dv)^T[\lambda \nabla I + \sum_{i=0}^{N-1} \gamma_i \nabla h_i] +(\nabla \lambda^T v)\nabla I \\
&+& \lambda HIv + \sum_{i=0}^{N-1} \left[ (\nabla \gamma_i)^T v\right]\nabla h_i + \gamma_i H h_iv 
- G_{\sigma}\star(m^T \nu_i)\nabla h_i\,.
\end{eqnarray}
Noticing that
\begin{equation}
    D(\lambda \nabla I)v = \lambda HIv + \nabla\lambda^T v\nabla I
\end{equation}
we can write 
\begin{eqnarray}
    &~&(\nabla\lambda^T v)\nabla I + \lambda HIv + \sum_{i=0}^{N-1}((\nabla\gamma_i)^T v)\nabla h_i + \gamma_i Hh_i v\\
    &~~~~~& = D(\lambda\nabla I)v + \sum_{i=0}^{N-1} D(\gamma_i\nabla h_i)v = (Dm)v\,.
\end{eqnarray}
Finally, we get
\begin{equation}
-m_t = m\text{div}(v) + (Dv)^T m + (Dm)v - \sum_{i=0}^{N-1} G_{\sigma}\star(m^T \nu_i)\nabla h_i\,,
\end{equation}
which gives the result.

\subsection{Energy Conservation}
\label{sec:energy_conservation}

To formulate a shooting-based solution we would like to avoid integrating $\|v\|_L^2$ over time. We here show that this quantity is conserved. Hence, $\int_0^1 \|v\|_L^2~\ud t=\|v(0)\|_L^2$, which allows us to write our shooting equations only with respect to initial conditions subject to the momentum-based evolution equations of RDMM.

%We give hereafter a formal proof of the conservation of energy. We simply use with the EPDiff like equation \eqref{EqREPDiff}.
Recall that the energy is preserved by the EPDiff equation since it can alternatively be written as 
\begin{equation}
    \partial_t m + \on{ad}_{v}^*m = 0\,,
\end{equation}
where $\on{ad}^*$ is the adjoint of $\on{ad}_v w := \ud v(w) - \ud w(v)$.
It implies that 
\begin{align*}
  \frac{\ud }{\ud t}  \langle m,K \star m\rangle =- 2 \langle \on{ad}_{v}^*m,K\star m\rangle\, = \langle m,\on{ad}_v v\rangle = 0\,,
\end{align*}
since $\on{ad}_v v = 0$. In fact, there is more than conservation of the energy, since the momentum is actually advected along the flow. Now, formula \eqref{EqREPDiff} can be shortened as $\partial_t m + \on{ad}_{v}^*m = \sum_i G_\sigma \star (m^T \nu_i) \nabla h_i$ and it implies that, denoting the kernel $K(w_i)$ to shorten the notations,
\begin{align*}
  \frac{\ud }{\ud t}  \langle m,K(w_i) \star m\rangle &=- 2 \langle \on{ad}_{v}^*m,K(w_i)\star m\rangle + 2\langle \sum_i G_\sigma \star  (m^T \nu_i) \nabla h_i,v\rangle + 2\langle m,\sum_i \partial_t w_i \nu_i\rangle \\
  & = 2\langle \sum_i G_\sigma \star (m \nu_i) \nabla h_i,v\rangle + 2\langle m ,\sum_i (\partial_t w_i) \nu_i\rangle = 0
\end{align*}
since the first term vanishes as for the standard EPDiff equation and the two other terms cancel each other since $\partial_t w_i = -G_\sigma \star \nabla h_i \cdot v$ and $v = \sum_i G_{\sigma_i} \star (w_i m)$. Here we assumed the kernel to be symmetric in writing this equation but the result holds in general, the equations being simply modified with the transpose kernel.

\subsection{Mathematical properties}
\label{sec:mathematical_properties}

In this section, we prove that given $(\nu_i)_{i=0,\ldots,N-1}$, there exists a solution $\varphi(t)$ solving Equations \eqref{A_reg_energy} and \eqref{A_reg_condition} at least until a time $T>0$ which could be less than $1$. The notations $\| \cdot \|_{k,\infty}$ or $\| \cdot \|_{C^k}$ denote the sup norm of $C^k$ maps.

\begin{theorem}
Let $V_{N-1} \subset \ldots \subset V_0$ and suppose for every $\nu_k \in V_k$, $\| \nu_k \|_{V_k} \leq const \| \nu_k\|_{V_1} \leq const \| \nu_k \|_{2,\infty}$.
Given initial weights $(h_i(t = 0))_{i =0}^{N-1} \in L^2$ and time dependent vector fields $\nu_i(t) \in V_i$, there exists a unique solution $\varphi(t)$ to Equations \eqref{eq:rdmm_image_constraints} until time $1$.
\end{theorem}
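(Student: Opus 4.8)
The plan is to read the constraints \eqref{eq:rdmm_image_constraints} as an ordinary differential equation on a Banach space and to solve it by a Picard--Lindel\"of / contraction argument, local in time and then continued up to $t=1$ by an a priori bound that keeps the flow inside the diffeomorphism group. Since the fields $\nu_i(t)\in V_i$ are prescribed, the genuine unknowns are only the flow $\varphi$ (equivalently its inverse $\varphi^{-1}$, governed by \eqref{EqFlowEquation}) and the advected weights $h_i$; the image then follows by passive transport, $I(t)=I_0\circ\varphi^{-1}(t)$, so its existence is automatic once $v$ is known. The structural feature that makes the whole scheme closable is that the weights enter the velocity only after Gaussian smoothing, $w_i=G_\sigma\star h_i$: smoothing turns an $L^2$ weight into a $w_i$ bounded in every $C^k$ with $\|w_i\|_{C^k}\le C(\sigma,k)\,\|h_i\|_{L^2}$.

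First I would pin down the regularity of $v=\sum_i w_i\nu_i$ and obtain an a priori bound. The embedding hypotheses encode a continuous inclusion $V_i\hookrightarrow C^2$ (uniformly in $i$), so each $\nu_i(t)$ is $C^2$ with norm controlled by $\|\nu_i(t)\|_{V_i}$; together with the $C^k$ bound on $w_i$ and the product rule this gives $\|v(t)\|_{C^1}\le C\big(\sum_i\|h_i(t)\|_{L^2}\big)\big(\sum_i\|\nu_i(t)\|_{V_i}\big)$. The key is that $\sum_i\|h_i(t)\|_{L^2}$ stays bounded uniformly in $t$: the weights are advected, so their pointwise values are conserved along characteristics, and the initial normalization $\sum_i h_i^2\equiv 1$ is therefore preserved, giving $\|h_i(t)\|_{L^\infty}\le 1$ and hence a fixed $L^2$ bound on the bounded image domain; for merely $L^2$ data the same control follows from $\|h_i(t)\|_{L^2}^2\le \bigl\| |D\varphi(t)| \bigr\|_\infty\,\|h_i(0)\|_{L^2}^2$ combined with the Liouville/Gronwall estimate $\bigl\| |D\varphi(t)| \bigr\|_\infty\le \exp\!\big(\int_0^t\|\Div v\|_\infty\,\ud s\big)$ and $\|\Div v\|_\infty\le\|v\|_{C^1}$. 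Either way $\int_0^1\|v(t)\|_{C^1}\,\ud t<\infty$, and the classical admissibility theorem for flows of vector fields \cite{dupuis1998variational} then guarantees that $v$ generates a flow of $C^1$ diffeomorphisms on all of $[0,1]$.

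Next I would construct the solution and prove uniqueness by a contraction on a short interval $[0,T]$. Define the map $\Phi$ sending a candidate forward flow $\varphi$ (in a closed ball of $C^1$ diffeomorphisms) to the flow $\tilde\varphi$ of the velocity reconstructed from it, $v=\sum_i\big(G_\sigma\star(h_i(0)\circ\varphi^{-1})\big)\,\nu_i$. The delicate point --- and the step I expect to be the main obstacle --- is the Lipschitz dependence of $w_i=G_\sigma\star(h_i(0)\circ\varphi^{-1})$ on $\varphi$, because the datum $h_i(0)$ is only $L^2$ and composition with a perturbed map is not continuous on $L^2$ for a fixed rough datum. The resolution is to move the dependence onto the smooth kernel by the change of variables $z=\varphi^{-1}(y)$, which gives $w_i(x)=\int G_\sigma\big(x-\varphi(z)\big)\,h_i(0)(z)\,|D\varphi(z)|\,\ud z$; now $\varphi$ appears only inside $G_\sigma$ and the Jacobian factor, both boundedly differentiable, so differentiating in $x$ and estimating by Cauchy--Schwarz yields $\|w_i^{(1)}-w_i^{(2)}\|_{C^1}\le C\,\|h_i(0)\|_{L^2}\,\|\varphi_1-\varphi_2\|_{C^1}$ for two candidates $\varphi_1,\varphi_2$.

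From this $C^1$-Lipschitz control of the reconstructed velocity, together with the standard Lipschitz dependence of a flow on its generating field, one gets that $\Phi$ is a contraction once $T$ is small enough, so a unique local solution exists; the a priori bound of the second paragraph shows the $C^1$ norm of $v$ does not blow up, so the local solution can be continued with a uniform time-step up to $t=1$, and a standard open--closed connectedness argument upgrades local uniqueness to uniqueness on $[0,1]$. Essentially every estimate outside the composition/smoothing Lipschitz bound is a routine Gronwall or continuation argument; the crux is calibrating the constants in that bound so that the contraction closes.
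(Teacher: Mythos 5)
Your proposal is correct and follows essentially the same route as the paper's proof: a fixed-point/contraction argument on curves of $C^1$ diffeomorphisms for the map $\varphi \mapsto \operatorname{Fl}(v[\varphi])$ with $v[\varphi]=\sum_i \bigl(G_\sigma\star(h_i(0)\circ\varphi^{-1})\bigr)\nu_i$, where the crucial Lipschitz dependence of the smoothed weights on $\varphi$ despite the merely $L^2$ datum is obtained by exactly the paper's change of variables $y=\varphi(t,z)$, transferring the flow dependence onto the smooth kernel and the Jacobian factor, followed by Gronwall estimates and iteration of the local existence interval up to $t=1$. Your supporting ingredients (the pointwise bound $|h_i|\leq 1$ from the normalization constraint and the a priori $C^2$ flow estimate that makes the continuation step uniform) are the same ones the paper invokes.
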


\begin{proof}
The first step consists in proving that there exists a solution locally in time. To this end, the proof follows a fixed point argument on the space $C^0([0,1],\on{Diff}_{C^1}(\Omega))$, \ie  the space of continuous curves in $\on{Diff}_{C^1}(\Omega)$, for the map
\begin{equation}
T(\varphi) := \on{Fl}(v)
\end{equation}
where $v$ is defined as $v[\varphi] := \sum_{i = 0}^{N-1} G_\sigma \star h_i(\varphi^{-1}(t,y))  \nu_i(t,\varphi(t,x))$. 
The existence of the flow associated with $v[\varphi]$ is ensured by standard arguments provided that the Lipschitz constant of $v[\varphi]$ is bounded. It is the case since $G_\sigma(x,y) \star h_i(\varphi^{-1}(t,y))$ has a Lipschitz constant bounded by $ \sup_{x \in \Omega }| \partial_1 G_{\sigma}(x,y)|$ since $| h_i(\varphi^{-1}(t,y)) | \leq 1$. This gives $\| v[\varphi] \|_{1,\infty} \leq  \sum_{i = 0}^{N-1} M \| \nu_i \|_{1,\infty}$ and the constant $M$ does not depend on $\varphi$. A similar inequality holds for the sup norm on the  derivatives up to order $k$ provided each space $V_i$ continuously embeds in $C^k$.
\par
One has the inequality
\begin{equation}\label{EqFirstEstimate}
\| T(\varphi)(t) \|_{2,\infty} \leq e^{\int_0^t \| v[\varphi] \|_{2,\infty} \ud s}
\end{equation}
and therefore, $\| T(\varphi)(t) \|_{2,\infty}$ is bounded a priori by a positive constant which does not depend on $\varphi$.
Using Gronwall's lemma \eqref{ThmGronwall}, one has also
\begin{equation}
\| T(\varphi)(t) - T(\psi)(t) \|_{1,\infty} \leq \sqrt{t} \| v[\varphi] - w[\psi] \|_{L^2([0,t],C^1)} e^{\int_0^t (1 + \| \varphi \|_{1,\infty}) \| v \|_{C^2} \ud s}\,.
\end{equation}
Moreover, by a change of variable $y=\varphi(t,x)$ we have
\begin{equation}
G_\sigma \star h_i(\varphi^{-1}(t,y)) = G_\sigma(x,\varphi(t,y)) \star (\on{Jac}(\varphi(t,y))h_i(y))\,
\end{equation}
and therefore
\begin{equation}
\| G_\sigma \star h_i(\varphi^{-1}(t,y))  \nu_i - G_\sigma \star h_i(\psi^{-1}(t,y))  \nu_i \|_{C^1}  \leq M' \| \varphi - \psi \|_{C^1} \| \nu_i \|_{C^1}\,.
\end{equation}
Thus, we deduce the inequality
\begin{equation}
\| v[\varphi] - w[\psi] \|_{L^2([0,t],C^1)} \leq M \sup_{s \in [0,t]} \| \psi(s) - \varphi(s) \|_{C^1}\,,
\end{equation}
therefore, the map $T$ is a contraction for a time $T$ small enough. 
Using Equation \eqref{EqFirstEstimate}, it is easily seen that this existence can be applied on $[T,2T]$ and iterating this argument shows existence until time $t=1$.
\end{proof}

\begin{theorem}
The variational problem \eqref{A_reg_energy} under the constraints of Equations \eqref{A_reg_condition} has a solution.
\end{theorem}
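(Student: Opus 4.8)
The plan is to apply the direct method of the calculus of variations, using the preceding existence-and-uniqueness theorem as the workhorse: for each admissible choice of time-dependent fields $\{\nu_i\}$ and initial pre-weights $\{h_i(0)\}$ it supplies a well-defined flow $\varphi$, and hence well-defined transported quantities $I(t)$, $h_i(t)$, $w_i(t)=G_\sigma\star h_i(t)$ and velocity $v(t)=\sum_i w_i\nu_i$. I would therefore treat the energy $E$ in \eqref{A_reg_energy} as a functional of the independent variables $(\{\nu_i\},\{h_i(0)\})$, with everything else slaved to them through the constraints \eqref{A_reg_condition}.

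First I would take a minimizing sequence $(\{\nu_i^n\},\{h_i^n(0)\})$. Coercivity comes directly from the energy: the term $\frac12\int_0^1\sum_i\|\nu_i^n\|_{V_i}^2\ud t$ bounds the $\nu_i^n$ uniformly in $L^2([0,1],V_i)$, while the pre-weight constraints (non-negativity and the pointwise normalization $\sum_i h_i^2=1$) together with the $\on{Reg}$ term keep $\{h_i^n(0)\}$ bounded in $L^2$. By reflexivity I extract a subsequence with $\nu_i^n\rightharpoonup\nu_i^\infty$ weakly in $L^2([0,1],V_i)$ and $h_i^n(0)\rightharpoonup h_i^\infty(0)$ weakly in $L^2$.

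The crux is showing that the flows converge and that the limiting data are admissible. Using the a priori bound \eqref{EqFirstEstimate} and the velocity estimate $\|v[\varphi^n]\|_{1,\infty}\le\sum_i M\|\nu_i^n\|_{1,\infty}$ from the previous theorem, the family $\{\varphi^n\}$ is uniformly bounded in $C([0,1],\on{Diff}_{C^1}(\Omega))$ and equicontinuous in time, so Arzelà--Ascoli yields a subsequence with $\varphi^n\to\varphi^\infty$ uniformly in $C^1$. I would then identify $\varphi^\infty$ as the flow generated by the weak-limit data by passing to the limit in the integral form of the flow equation. The delicate point is the nonlinear coupling $v=\sum_i w_i\nu_i$: here I combine weak convergence of $\nu_i^n$ with \emph{strong} convergence of $w_i^n=G_\sigma\star h_i^n$. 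The latter holds because $h_i^n$ is merely advected (hence uniformly bounded in $L^\infty$, as $|h_i(0)|\le1$) and $G_\sigma\star$ smooths, so the uniform flow convergence together with the convolution upgrades $w_i^n\to w_i^\infty$ strongly (with derivatives); the product of a strongly and a weakly convergent factor then converges to the product of the limits, which lets me close the limit in the ODE. I would likewise verify that $I^n(1)=I_0\circ(\varphi^n(1))^{-1}$ converges and that the limit pre-weights stay admissible: non-negativity survives weak $L^2$ limits since the positive cone is convex and closed, hence weakly closed. The genuinely subtle constraint is the non-convex normalization $\sum_i h_i^2=1$, which is not preserved under weak limits; I expect this to be the main obstacle, to be handled either by relaxing it to $\sum_i h_i^2\le1$ (a convex, weakly closed condition) or by separately establishing strong convergence of $h_i^n(0)$.

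Finally I would pass to the limit in the energy. The velocity term is convex and strongly continuous in $\{\nu_i\}$, hence weakly lower semicontinuous; $\on{Sim}(I(1),I_1)$ is continuous under the established convergence of $I(1)$; and $\on{Reg}(\{h_i(0)\})$, being a convex OMT-type penalty on the initial pre-weights, is weakly lower semicontinuous. Adding these gives $E(\{\nu_i^\infty\},\{h_i^\infty(0)\})\le\liminf_n E(\{\nu_i^n\},\{h_i^n(0)\})=\inf E$, so the weak limit realizes the infimum and is a minimizer. As noted, the heart of the argument is the flow convergence step and the passage to the limit in the weight-dependent velocity while keeping the transported pre-weights admissible.
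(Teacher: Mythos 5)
Your proposal follows essentially the same route as the paper: the paper's entire proof is a two-line invocation of the direct method of the calculus of variations (citing lower semicontinuity of the sum of squared norms and weak closedness of the penalty and constraints under weak convergence of $(\nu_i)$, with a pointer back to the a priori flow estimates), and your minimizing-sequence argument — coercivity from $\int_0^1 \sum_i \|\nu_i\|^2_{V_i}\,\mathrm{d}t$, Arzel\`a--Ascoli compactness of the flows via the estimate \eqref{EqFirstEstimate}, and closing the nonlinear term $v=\sum_i w_i \nu_i$ by pairing weak convergence of $\nu_i^n$ with strong convergence of the smoothed weights $w_i^n = G_\sigma \star h_i^n$ — is precisely the detailed execution of that sketch. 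The one place you go beyond the paper is your correct observation that the pointwise normalization $\sum_i h_i^2(0,x)=1$ is a non-convex (sphere) constraint and hence not weakly closed in $L^2$: the paper's assertion of weak closedness is phrased only for convergence in $(\nu_i)$ and silently elides this point, and your proposed remedies — relaxing to the convex, weakly closed constraint $\sum_i h_i^2 \le 1$, or securing strong convergence of the pre-weights (plausible here since the energy depends on $h_i(0)$ only through the convolved weights $w_i=G_\sigma\star h_i$, for which weak convergence of $h_i^n(0)$ already yields strong convergence) — are exactly what a complete write-up of the paper's proof would have to supply.
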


\begin{proof}
The direct method of calculus of variations can be applied here, see Sect.~\ref{sec:optimality_conditions}. The sum of squared norms are lower semicontinuous; The penalty term as well as the constraints are weakly closed for the weak convergence on $(\nu_i)$.
\end{proof}

\begin{lemma}
Let $u,v \in L^2([0,1],C^2)$ and let $\varphi,\psi$ be their associated flows. The following estimates hold,
\begin{equation}
\| \varphi(t) \|_{C^2} \leq e^{\int_0^t \| v(s) \|_{C^2} \ud s}\,,
\end{equation}
and 
\begin{equation}
\| \varphi(t) - \psi(t) \|_{C^1} \leq \sqrt{t}M \| u - v \|_{L^2([0,t],C^1)} e^{\int_0^t(1 + \| \varphi \|_{1,\infty}) \| v \|_{C^2} \ud s}\,.
\end{equation}
where $M$ is a constant that bounds $\| \varphi \|_{1,\infty}$.
\end{lemma}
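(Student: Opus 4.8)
The plan is to read both inequalities as classical a priori estimates for the flow of the non-autonomous ODE defining $\varphi$ and $\psi$, obtained by differentiating the flow equation in the spatial variable and then invoking Gronwall's inequality. I write the flow of $v$ in integral form as $\varphi(t,x) = x + \int_0^t v(s,\varphi(s,x))\,\ud s$, so that $\partial_t\varphi = v\circ\varphi$ with $\varphi(0,\cdot)=\on{Id}$, and similarly $\psi$ for $u$.

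For the first estimate I would differentiate the flow equation once and twice in $x$. The chain rule gives $\partial_t(D\varphi) = (Dv\circ\varphi)\,D\varphi$ and $\partial_t(D^2\varphi) = (D^2v\circ\varphi)(D\varphi,D\varphi) + (Dv\circ\varphi)\,D^2\varphi$, each linear in the top-order derivative of $\varphi$ with coefficients controlled by $\|v\|_{C^2}$. Estimating the time derivative of $\|D\varphi(t)\|_\infty$ and $\|D^2\varphi(t)\|_\infty$ pointwise and folding them, together with $\|\varphi(t)\|_\infty$, into the single quantity $\|\varphi(t)\|_{C^2}$ yields a differential inequality of the form $\tfrac{\ud}{\ud t}\|\varphi(t)\|_{C^2}\le \|v(t)\|_{C^2}\,\|\varphi(t)\|_{C^2}$; Gronwall's lemma then integrates this to the stated exponential bound. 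Some care is needed with the quadratic term $(D\varphi)^{\otimes2}$ appearing in the $D^2\varphi$ equation, but using that $\|D\varphi\|_\infty$ is already bounded by the first-order version of the estimate keeps the coefficient linear.

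For the second estimate I set $\delta:=\varphi-\psi$ and write $\partial_t\delta = v\circ\varphi - u\circ\psi = (v\circ\varphi - v\circ\psi) + (v-u)\circ\psi$. The first summand is bounded by $\|v\|_{C^1}\|\delta\|_{C^0}$ via the mean value theorem, while the second is a source term controlled by $\|u-v\|_{C^0}$. To promote this to the $C^1$ norm I differentiate $\delta$ once in $x$: the homogeneous part produces $Dv\circ\varphi\,D\varphi - Dv\circ\psi\,D\psi$, which I split as $(Dv\circ\varphi - Dv\circ\psi)D\varphi + (Dv\circ\psi)(D\varphi-D\psi)$, introducing $\|v\|_{C^2}$ (through $D^2v$ and the mean value theorem applied to $Dv$) and the a priori factor $\|D\varphi\|_\infty\le M$ from the first part, while the source part contributes $(D(v-u)\circ\psi)D\psi$, bounded by $M\|u-v\|_{C^1}$. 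Collecting terms gives the linear inhomogeneous inequality $\tfrac{\ud}{\ud t}\|\delta(t)\|_{C^1}\le(1+\|\varphi\|_{1,\infty})\|v\|_{C^2}\,\|\delta(t)\|_{C^1} + M\|u-v\|_{C^1}$, and the inhomogeneous Gronwall lemma followed by the Cauchy--Schwarz bound $\int_0^t\|u-v\|_{C^1}\,\ud s\le\sqrt{t}\,\|u-v\|_{L^2([0,t],C^1)}$ delivers exactly the claimed estimate.

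\textbf{Main obstacle.} The delicate step is the first-derivative difference estimate: controlling $Dv\circ\varphi\,D\varphi - Dv\circ\psi\,D\psi$ forces the use of the second derivatives of $v$ (hence the hypothesis $v\in C^2$ and the appearance of $\|v\|_{C^2}$ in the exponent) and relies on the a priori $C^2$ bound on $\varphi$ from the first part to keep all coefficients finite. The careful bookkeeping of these composition-difference terms, together with the correct placement of the $\sqrt{t}$ factor via Cauchy--Schwarz, is where essentially all of the work lies; the two Gronwall applications themselves are routine.
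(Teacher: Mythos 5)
Your proof is correct and follows essentially the same route as the paper's: the same decomposition of $u\circ\varphi - v\circ\psi$ into a source term plus composition-difference terms controlled via the mean value theorem and $\|v\|_{C^2}$, producing the homogeneous coefficient $(1+\|\varphi\|_{1,\infty})\|v\|_{C^2}$, followed by Gronwall and Cauchy--Schwarz to obtain the $\sqrt{t}$ factor. The only (cosmetic) deviation is that you compose the source $(v-u)$ with $\psi$ where the paper composes it with $\varphi$, so your constant $M$ bounds $\|\psi\|_{1,\infty}$ rather than $\|\varphi\|_{1,\infty}$; since both flows are a priori bounded by the first estimate, this is immaterial.
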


\begin{proof}
Use Gronwall's lemma \eqref{ThmGronwall} recalled below on the following inequality coming from the flow equation
\begin{align}
\| \varphi(t) - \psi(t) \|_{1,\infty}& \leq \int_0^t \| \ud u \circ \varphi(t) \cdot \ud \varphi(t)- \ud w \circ \psi(t)\cdot \ud \psi(t) \|_{0,\infty}  \ud s\\
& \leq \int_0^t \| u-v \|_{1,\infty} \|\varphi \|_{1,\infty}\! +\! \| v \|_{2,\infty} \|\varphi \|_{1,\infty} \| \varphi - \psi \|_{0,\infty} \!+\! \| dv \|_{1,\infty} \| \varphi - \psi \|_{1,\infty}\ud s\\
& \leq \int_0^t \| u-v \|_{1,\infty} \|\varphi \|_{1,\infty}  + (1 +  \|\varphi \|_{1,\infty})\| v \|_{2,\infty} \| \varphi - \psi \|_{1,\infty}\ud s\,.
\end{align}
\end{proof}

Recall that Gronwall's lemma is 
\begin{lemma}\label{ThmGronwall}
Let $r$ be a nonnegative function on $\R$ such that 
\begin{equation}
r(t) \leq c(t) +\left| \int_0^t \alpha(s) r(s) \ud s \right|
\end{equation}
for given positive functions $\alpha$ and $c$.
Then, 
\begin{equation}
r(t) \leq c(t) + \left| \int_0^t \alpha(s) c(s) e^{|\int_0^t \alpha(s) \ud s|} \ud s \right| \,,
\end{equation}
and if $c$ is a constant, a further simplified formula is
\begin{equation}
r(t) \leq c e^{|\int_0^t \alpha(s) \ud s|}\,.
\end{equation}
\end{lemma}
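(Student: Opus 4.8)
The statement is the classical integral form of Gronwall's inequality, and the plan is to reduce it to a linear differential inequality that is solved by an integrating factor. First I would restrict to $t \ge 0$ (the case $t \le 0$ being symmetric, which is precisely why the absolute values appear in the statement) and set $R(t) := \int_0^t \alpha(s) r(s)\ud s$, so that the hypothesis reads $r(t) \le c(t) + R(t)$. Since $\alpha$ and $r$ are integrable (in all applications in the previous sections they are continuous), $R$ is absolutely continuous with $R'(t) = \alpha(t) r(t)$ almost everywhere, and substituting the hypothesis gives the differential inequality $R'(t) \le \alpha(t) c(t) + \alpha(t) R(t)$.

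Second, I would multiply through by the positive integrating factor $e^{-\int_0^t \alpha(s)\ud s}$ to rewrite this as $\frac{\ud}{\ud t}\big(R(t)\, e^{-\int_0^t \alpha}\big) \le \alpha(t) c(t)\, e^{-\int_0^t \alpha}$. Integrating from $0$ to $t$ and using $R(0)=0$ yields $R(t) \le \int_0^t \alpha(s) c(s)\, e^{\int_s^t \alpha(\tau)\ud\tau}\ud s$. Bounding the inner exponent via $\int_s^t \alpha \le \int_0^t \alpha$ (valid because $\alpha \ge 0$) and inserting the result into $r(t) \le c(t) + R(t)$ produces exactly the first claimed inequality.

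Third, for the special case in which $c$ is constant I would instead retain the sharper exponent and evaluate the integral in closed form. Here $R(t) \le c\int_0^t \alpha(s)\, e^{\int_s^t \alpha}\ud s$, and since $\frac{\ud}{\ud s}\big(-e^{\int_s^t \alpha}\big) = \alpha(s)\, e^{\int_s^t \alpha}$, this integral equals $c\big(e^{\int_0^t \alpha} - 1\big)$. Adding $c$ collapses the bound to $r(t) \le c\, e^{\int_0^t \alpha}$, which is the stated simplified formula.

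I do not expect a genuine obstacle, as this is a textbook result; the only points requiring care are the regularity bookkeeping needed to justify that $R$ is absolutely continuous and that the integrating-factor identity holds almost everywhere, and the sign conventions for $t<0$, which are exactly what the absolute values in the statement encode. If one prefers to avoid the differentiability argument altogether, an equally valid plan is to iterate the hypothesis into itself: repeated substitution generates the partial sums of $\sum_n (\int_0^t \alpha)^n / n!$ together with an $n$-fold iterated-integral remainder bounded by $(\int_0^t \alpha)^n \sup r / n!$, which tends to $0$; the series sums to the exponential and reproduces the constant-$c$ bound directly.
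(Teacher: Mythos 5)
Your proof is correct, but there is nothing in the paper to compare it against: the lemma is introduced with ``Recall that Gronwall's lemma is'' and stated as a classical fact supporting the preceding flow estimates, with no proof supplied. Your integrating-factor argument is the standard one and does establish exactly what is claimed; in particular, your weakening step $\int_s^t \alpha \le \int_0^t \alpha$ (valid since $\alpha \ge 0$) is precisely what converts the sharp classical bound $r(t) \le c(t) + \int_0^t \alpha(s)\,c(s)\,e^{\int_s^t \alpha(\tau)\ud\tau}\ud s$ into the slightly weaker form with the constant exponent $e^{\left|\int_0^t \alpha(s)\ud s\right|}$ that the lemma actually states, and your closed-form evaluation $\int_0^t \alpha(s)\,e^{\int_s^t \alpha}\ud s = e^{\int_0^t \alpha}-1$ yields the constant-$c$ formula exactly. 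The caveats you flag are the right ones: absolute continuity of $R(t)=\int_0^t \alpha r$ is needed for the differential-inequality route (and holds under the continuity assumptions in force where the paper applies the lemma), the absolute values in the statement encode the symmetric treatment of $t<0$, and your fallback iteration argument, which needs only local boundedness of $r$, is a legitimate way to bypass the differentiability bookkeeping altogether.
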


\subsection {Initial Pre-weight Regularization}\label{sec:initial_reg}
The initial regularization term ${Reg}(\{h_i(0)\})$ determines the behavior of the initial regularizer. In our experiments, 
\begin{equation}
    {Reg}(\{h_i(0)\},T) =\lambda_{\mathrm{OMT}}(T) {OMT}(\{h_i(0)\}) + \lambda_{\mathrm{Range}}(T){Range}(\{h_i(0)\})\, ,
\end{equation}
%\lambda_{\mathrm{TV}}(T){TV}(h(0))\, ,
where $\lambda_{\mathrm{OMT}}$ and $\lambda_{\mathrm{Range}}$ are scale factors; T refers to the iteration/epoch. Specifically,
\begin{equation}
    {OMT}=\left|\log \frac{\sigma_{N-1}}{\sigma_{0}}\right|^{-s} \sum_{i=0}^{N-1} w_{i}\left|\log \frac{\sigma_{N-1}}{\sigma_{i}}\right|^{s}
\end{equation}
where $s$ is the chosen power and
\begin{equation}
    {Range}= \|G_{\sigma}\star(h(0))-w_0\|^2_2
\end{equation}
where $w_0$ is the pre-defined initial weight. The range loss penalizes differences between the initial weight, $w(0)$, from the pre-defined one, $w_0$.% \mnl{What is a pre-defined initial weight?}

At the beginning of the optimization/training, it is difficult to jointly optimize over the momentum and the pre-weights. Hence, we constrain the pre-weights by introducing the Range loss that penalizes the difference between the optimized and pre-defined pre-weights. Besides, as we prefer well-regularized (\ie, smooth) transformation, we use the OMT loss to penalize weight assignments to Gaussians with small standard deviations. To solve the original model, the influence of the range penalty needs to diminish while the influence of the OMT term need to increase during training. In practice, we introduce epoch-dependent decay factors:
 \begin{equation}
 \lambda_T = \frac{K}{K+e^{T/K}}\quad,
  \lambda_{\mathrm{Range}} := C_{\mathrm{Range}}\lambda_T ,\quad
  \lambda_{\mathrm{OMT}} =C_{\mathrm{OMT}}(1 -\lambda_T),\quad
  %%\lambda_{\mathrm{TV}} = C_{\mathrm{TV}}(1- \lambda_T),
\label{eq:regularization_penalty_weights}
\end{equation}
where $C_{\mathrm{Range}}$ and $C_{\mathrm{OMT}}$ are pre-defined constants, $K$ controls the decay rate, and $T$ indicates the iteration/epoch.

% Another {\it alternative} regularization is total variational term. The regularization thus tends to be consistent in regions rather than only takes place at the boundary.
% \begin{equation}
%     TV=\lambda_{\mathrm{TV}} \sqrt{\sum_{i=0}^{N-1}\left(\int \gamma\left(\left\|\nabla I_{0}(x)\right\|\right)\left\|\nabla h_{i}(x)\right\|_{2} \mathrm{dx}\right)^{2}}. 
% \end{equation}

\subsection{LDDMM/RDMM Unit}
\label{sec:unit}

Fig.~\ref{fig:pipeline} shows the flow charts for LDDMM and RDMM. \zy{Additionally, for RDMM with a pre-defined regularizer, we define $\{h_i(0)\}$ in the source image space, as foreground and background are easier to specify there. For RDMM with an optimized/learnt regularizer we define $\{h_i(0)\}$ in the pre-aligned image space, since the goal is to find the optimal initial conditions that determine the geodesic path based on the RDMM shooting equations; specifically, we take $\varphi^{-1}(0)=id$ as the input, and the final output composes the initial map and the transformation map, $\varphi^{-1}(1)$.}

\begin{figure}
%\hspace*{-1cm}
\includegraphics[width=1\textwidth]{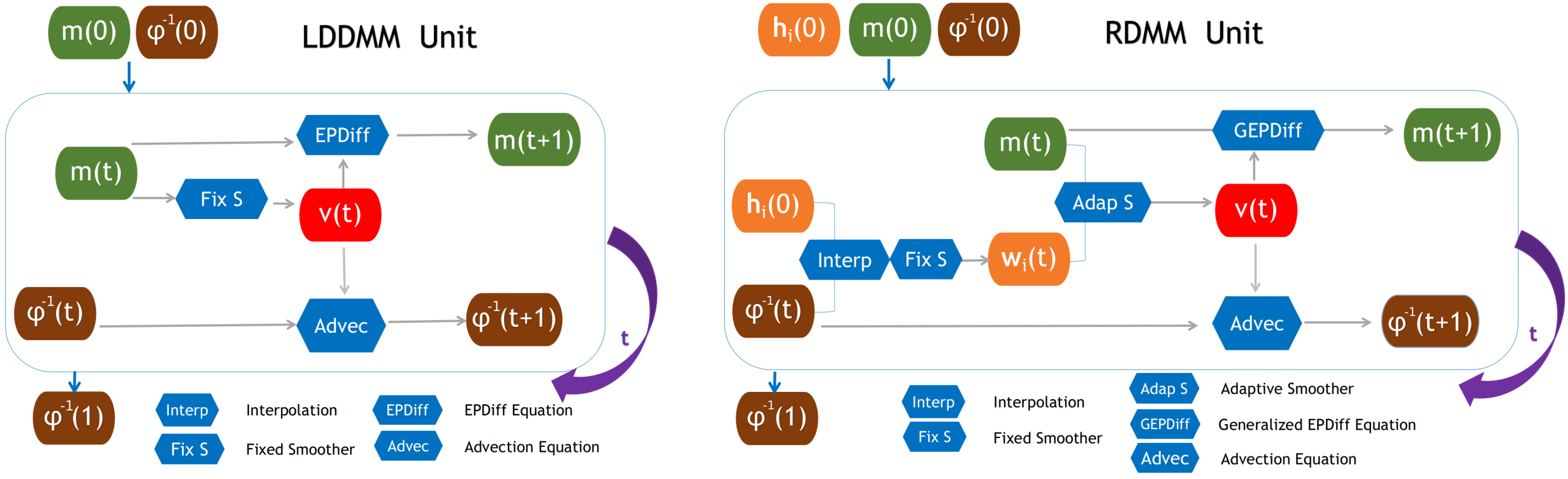}
\caption{Flow chart of LDDMM (left) and our RDMM model (right). LDDMM solves EPDiff and advects the transformation map, whereas in RDMM a modified EPDiff equation is solved combined with an advection of the transformation map and the pre-weights for the regularizer. Note that transformation map $\varphi^{-1}$ and the pre-weights $\{h_i\}$ are both advected according to the velocity field. Hence, instead of computing the advection equation, we update $\{h_i(t)\}$ by interpolating the initial pre-weights $\{h_i(0)\}$ via the current transformation map $\varphi^{-1}(t)$, which is more computationally efficient and avoids numerical dissipation.
%Additionally, for the RDMM network, we take $\varphi^{-1}_0=id$ as the input and the final output composes the initial map and the transformation map, $\varphi^{-1}_1$.
}
\label{fig:pipeline}
\end{figure}

\input{lp_derivation.add}

\subsection{Experimental settings}
\label{sec:experimental_settings}

For all experiments, we normalize the intensities of each image such that the $0.1$th percentile and the $99.9$th percentile are mapped to ${0,1}$ and clamp values that are smaller to $0$ and larger to $1$ to avoid outliers. We also assume that spatial coordinates of images are in $[0, 1]^d$, where $d$ is the spatial dimension. This makes the interpretation of the standard deviations of the regularizers straightforward.

{\bf Non-parametric family} For numerical optimization solutions, we use three image scales $\{ 0.25, 0.5$ and $1.0\}$. We use L-BGFS as the optimizer. For the deep learning models, we train the multi-step affine network first and then train the non-parametric network with the affine network fixed. For all methods, we use a multi-kernel Gaussian regularizer with standard deviations $\sigma_i=\{ 0.05, 0.1, 0.15, 0.2, 0.25\}$. For both vSVF and LDDMM, we use fixed corresponding weights $w^2_0=\{0.067, 0.133, 0.2,0.267,0.333\}$, which is also set as the initial value for the range loss in RDMM.

{\bf Baseline methods} 

For the numerical optimization approaches, we compare with three public registration tools: NiftyReg~\cite{ourselin2001reconstructing,modat2014global,rueckert1999nonrigid,modat2010fast}, SyN~\cite{avants2009advanced,avants2008symmetric} and Demons~\cite{vercauteren2009diffeomorphic,vercauteren2008symmetric}. Besides, we also compare with two recent deep-learning approaches: VoxelMorph \cite{dalca2018unsupervised} and AVSM (vSVF-net)\cite{shen2019networks}. For a fair comparison, we take the same experimental settings as in \cite{shen2019networks}.

{\bf Registration with a pre-defined regularizer}

For the synthetic registration experiments, we use a multi-kernel Gaussian regularizer with standard deviations $\sigma_i= \{0.03, 0.06, 0.09, 0.3\}$ with initial pre-weights $h_0^2=\{0.2,0.5,0.3,0.0\}$ (fixed during the optimization) for the foreground (the dark blue region) and $h_0^2=\{0,0,0,1\}$ for the background (the cyan region). The standard deviation for $G_\sigma$, to smooth the pre-weights, is set to $0.02$. For each image scale, we compute 60 registration iterations. \\
For the lung registration we use $\sigma_i= \{0.04, 0.06, 0.08,0.2\}$ for the multi-Gaussian regularizer with initial pre-weights $h_0^2=\{0.1, 0.4, 0.5, 0\}$ for the foreground (\ie the lung) and $h_0^2=\{0,0,0,1\}$ everywhere outside the lung. We set $\sigma$ in $G_\sigma$ to $0.05$.  For each image scale, we compute 60 registration iterations.

{\bf Registration with an optimized regularizer}

For the synthetic registration experiment, we use $\sigma_i= \{0.02, 0.04, 0.06,0.08\}$ for the multi-Gaussian regularizer and the initial pre-weights $h_0^2=\{0.1, 0.3, 0.3, 0.3\}$. These initial value are set at the beginning of the optimization. They are the same for vSVF, LDDMM and RDMM. $C_{\mathrm{Range}}$ and $C_{\mathrm{OMT}}$ are set to 10 and 0.05 respectively; K is set to 10; $\sigma$ in $G_\sigma$ is set to $0.05$. For image scale \{0.25, 0.5 and 1.0\}, we  compute \{100, 100 and 400\} iterations, respectively.\\
For the knee MRI registration of the OAI dataset, we use $\sigma_i= \{0.05, 0.1, 0.15, 0.2, 0.25\}$ for the multi-Gaussian regularizer with the initial pre-weights $h_0^2=\{0.067, 0.133, 0.2, 0.267, 0.333\}$ for all non-parametric registration models.  $C_{\mathrm{Range}}$ and $C_{\mathrm{OMT}}$ are set to 1 and 0.25 respectively; K is set to 6; $\sigma$ in $G_\sigma$ is set to $0.06$. For each image scale, we compute 60 registration iterations.

{\bf Registration with a learnt regularizer} 

For the deep learning approaches, the settings are the same as for numerical optimization, as described above. Besides, we include an additional inverse consistency loss, with the scale factor set to 1e-4, for vSVF, LDDMM and RDMM to regularize the deformation.

\end{document}
% =&  \langle G_{\sigma}\star(m^T\nu_i), dh_i\rangle  + \frac 12 \langle w_i \nu_i,dm\rangle 